\newtheorem{theorem}{Theorem}
\newtheorem{lemma}{Lemma}
\newtheorem{corollary}{Corollary}
\title{
Your GAN is Secretly an Energy-based Model and\\
You Should Use Discriminator Driven Latent Sampling
}
\author{
	Tong Che\thanks{Equal contribution. Ordering determined by coin flip.}\enspace\textsuperscript{1 2}
	\quad
	Ruixiang Zhang\footnotemark[1]\enspace\textsuperscript{1 2}
	\quad
	Jascha Sohl-Dickstein\textsuperscript{3}
	\quad
	\\ \\
	\textbf{Hugo Larochelle\textsuperscript{1 3}}
	\quad
	\textbf{Liam Paull\textsuperscript{1 2}}
	\quad
	\textbf{Yuan Cao\textsuperscript{3}}
	\quad
	\textbf{Yoshua Bengio\textsuperscript{1 2}}
	\\\\
	\small{$^1$Mila \quad $^2$ Université de Montréal \quad $^3$Google Brain}
	\\ 
	\small{\texttt{\{tong.che, ruixiang.zhang\}@umontreal.ca}}
}
\begin{document}

\maketitle

\begin{abstract}
The sum of the implicit generator log-density $\log p_g$ of a GAN with the logit score of the discriminator defines an energy function which yields the true data density when the generator is imperfect but the discriminator is optimal. This makes it possible to improve on the typical generator (with implicit density $p_g$). We show that samples can be generated from this modified density by sampling in {\em latent space} according to an energy-based model induced by the sum of the latent prior log-density and the discriminator output score. We call this process of running Markov Chain Monte Carlo in the latent space, and then applying the generator function, Discriminator Driven Latent Sampling~(DDLS). We show that DDLS is highly efficient compared to previous methods which work in the high-dimensional pixel space, and can be applied to improve on previously trained GANs of many types. We evaluate DDLS on both synthetic and real-world datasets qualitatively and quantitatively. On CIFAR-10, DDLS substantially improves the Inception Score of an off-the-shelf pre-trained SN-GAN~\citep{sngan} from $8.22$ to $9.09$ which is comparable to the class-conditional BigGAN~\citep{biggan} model. This achieves a new state-of-the-art in the unconditional image synthesis setting without introducing extra parameters or additional training.
\end{abstract}

 \section{Introduction}

Generative Adversarial Networks~(GANs) \citep{goodfellow2014generative} are state-of-the-art models for a large variety of tasks such as image generation~\citep{plugandplay}, semi-supervised learning \citep{dai2017good}, image editing~\cite{Yi_2017_ICCV}, image translation \cite{zhu2017unpaired}, and imitation learning \cite{ho2016generative}. The GAN framework consists of two neural networks, the generator $G$ and the discriminator $D$. The optimization process is formulated as an adversarial game, with the generator trying to fool the discriminator and the discriminator trying to better classify samples as real or fake.

Despite the ability of GANs to generate high-resolution, sharp samples, the samples of GAN models sometimes contain bad artifacts or are even not recognizable~\citep{karras2019analyzing}. It is conjectured that this is due to the inherent difficulty of generating high dimensional complex data, such as natural images, and the optimization challenge of the adversarial formulation. In order to improve sample quality,  conventional sampling techniques, such as increasing the temperature, are commonly adopted for GAN models \cite{biggan}. Recently, new sampling methods such as Discriminator Rejection Sampling~(DRS)~\citep{azadi2018discriminator},  Metropolis-Hastings Generative
Adversarial Network~(MH-GAN)~\citep{pmlr-v97-turner19a-mhgan}, and Discriminator Optimal Transport~(DOT)~\citep{tanaka2019discriminator} have shown promising results by utilizing the information provided by both the generator and the discriminator. However, these sampling techniques are either inefficient or lack theoretical guarantees, possibly reducing the sample diversity and making the mode dropping problem more severe. 

In this paper, we show that GANs can be better understood through the lens of Energy-Based Models~(EBM). In our formulation, GAN generators and discriminators collaboratively learn an ``implicit'' energy-based model. However, efficient sampling from this energy based model directly in pixel space is  {\em extremely} challenging for several reasons. One is that there is no tractable closed form for the implicit energy function in pixel space. 
This motivates an intriguing possibility: that Markov Chain Monte Carlo (MCMC) sampling may prove more tractable in the GAN's latent space.

Surprisingly, we 
find that the implicit energy based model defined jointly by a GAN generator and discriminator takes on a simpler, tractable form when it is written as an energy-based model over the generator's latent space.
In this way, we propose a theoretically grounded way of generating high quality samples from GANs through what we call Discriminator Driven Latent Sampling~(DDLS). DDLS leverages the information contained in the discriminator to re-weight and correct the biases and errors in the generator. Through experiments, we show that our proposed method is highly efficient in terms of  mixing time, is generally applicable to a variety of GAN models (e.g. Minimax, Non-Saturating, and Wasserstein GANs), and is robust across a wide range of hyper-parameters. An energy-based model similar to our work is also obtained simultaneously in independent work \citep{arbel2020kale} in the form of an approximate MLE lower bound. 

We highlight our main contributions as follows: 
\begin{itemize}
	\item We provide more evidence that it is beneficial to sample from the energy-based model defined both by the generator and the discriminator instead of from the generator only.
	\item We derive an equivalent formulation of the pixel-space energy-based model in the latent space, and show that sampling is much more efficient in the latent space.
	\item We show experimentally that samples from this energy-based model are of higher quality than samples from the generator alone.
	\item We show that our method can approximately extend to other GAN formulations, such as Wasserstein GANs. 
\end{itemize}
\section{Background}
\subsection{Generative Adversarial Networks}
GANs~\citep{goodfellow2014generative} are a powerful class of generative models defined through an adversarial minimax game between a generator network $G$ and a discriminator network $D$. The generator $G$ takes a latent code $z$ from a prior distribution $p(z)$ and produces a sample $G(z) \in X$. The discriminator takes a sample $x \in X$ as input and aims to classify real
data from fake samples produced by the generator, while the generator is asked to fool the discriminator as much as possible. We use $p_d$ to denote the true data-generating distribution and $p_g$ to denote the implicit distribution induced by the prior and the generator network. The standard non-saturating training objective for the generator and discriminator is defined as:
\begin{equation}
\begin{split}
L_D &= -\mathbb{E}_{x \sim p_{\text{data}}} [\log{D(x)}] - \mathbb{E}_{z \sim p_z} [\log{\left(1-D(G(z))\right)}] \\
L_G &= -\mathbb{E}_{z \sim p_z} [\log{D(G(z))}]
\end{split}
\end{equation}

Wassertein GANs~(WGAN)~\citep{arjovsky2017wasserstein} are a special family of GAN models. Instead of targeting a Jensen-Shannon distance, they target the 1-Wasserstein distance $W(p_g, p_d)$. The WGAN discriminator objective function is constructed using the Kantorovich duality $\max_{D\in \mathcal{L}} \mathbb{E}_{p_{\text{data}}}[D(x)] -\mathbb{E}_{p_g}[D(x)]$
where $\mathcal{L}$ is the set of 1-Lipstchitz functions. 

\subsection{Energy-Based Models and Langevin Dynamics}
An energy-based model (EBM) is defined by a Boltzmann distribution
$p(x) = e^{-E(x)}/Z$,
where $x\in \mathcal{X}$, $\mathcal{X}$ is the state space, and
$E(x):\mathcal{X}\rightarrow \mathbb{R}$ is the energy function. 
Samples are typically generated from $p(x)$ by an MCMC algorithm. One common MCMC algorithm in continuous state spaces is Langevin dynamics, with update equation $x_{i+1} = x_i - \frac{\epsilon}{2} \nabla_x E(x) + \sqrt{\epsilon} n, n\sim N(0, I)\label{eq langevin}$\footnote{Langevin dynamics are guaranteed to exactly sample from the target distribution $p(x)$ as $\epsilon \rightarrow 0, i \rightarrow \infty$. In practice we will use a small, finite, value for $\epsilon$ in our experiments. In this case, one can add an additional layer of M-H sampling, resulting in the MALA algorithm, to eliminate biases.}.



One solution to the problem of slow-sampling Markov Chains is to perform sampling using a carfefully crafted latent space \citep{DBLP:conf/icml/BengioMDR13,hoffman2019neutra}.
Our method shows how one can execute such latent space MCMC in GAN models.  
\section{Methodology}
\subsection{GANs as an Energy-Based Model}
 Suppose we have a GAN model trained on a data distribution $p_d$ with a generator $G(z)$ with generator distribution $p_g$ and a discriminator $D(x)$. We assume that $p_g$ and $p_d$ have the same support. This can be guaranteed by adding small Gaussian noise to these two distributions. 

The training of GANs is an adversarial game which 
generally does not 
converge to the optimal generator, 
so usually $p_d$ and $p_g$ do not match perfectly at the end of training. However, the discriminator provides a quantitative estimate for how much these two distributions (mis)match. Let's assume the discriminator is near optimality, namely \cite{goodfellow2014generative} $D(x) \approx \frac{p_d(x)}{p_d(x)+p_g(x)}$.
From this equation, let $d(x)$ be the logit of $D(x)$, in which case$
\frac{p_d(x)}{p_d(x)+p_g(x)} = \frac{1}{1 + \frac{p_g(x)}{p_d(x)}} \approx \frac{1}{1 + \exp\left(-d(x)\right)}$,
and 
we have $e^{d(x)}\approx p_d/p_g$, and $p_d(x)\approx p_g(x) e^{d(x)}$. 
Normalization of $p_g(x)e^{d(x)}$ 
is not guaranteed, and it will not typically be a valid probabilistic model.
We therefore consider the energy-based model
$p_d^*= p_g(x)e^{d(x)}/Z_0$,
where $Z_0$ is a normalization constant. Intuitively, this formulation has two desirable properties. First, as we elaborate later, if $D=D^*$ where $D^*$ is the optimal discriminator, then $p_d^*=p_d$. Secondly, it corrects the bias in the generator via weighting and normalization. If we can sample from this distribution, it should improve our samples. 

There are two difficulties in sampling efficiently from $p_d^*$:
\begin{enumerate}
\item Doing MCMC in pixel space to sample from the model is impractical due to the high dimensionality and long mixing time.
\item $p_g(x)$ is implicitly defined and its density cannot be computed directly.
\end{enumerate}

In the next section we show how to overcome these two difficulties. 

\subsection{Rejection Sampling and MCMC in Latent Space}

Our approach to the above two problems is to formulate an equivalent energy-based model in the latent space. To derive this formulation, we first review rejection sampling~\cite{10.2307/4356322}. With $p_g$ as the proposal distribution, we have $e^{d(x)}/\,Z_0 = p^*_d(x)\,/\,p_g(x)$. Denote $M=\max_x p^*_d(x)/\,p_g(x)$ (this is well-defined if we add a Gaussian noise to the output of the generator and $x$ is in a compact space). If we accept samples from proposal distribution $p_g$ with probability $p^*_d\,/\,(M p_g)$, then the samples we produce have the distribution $p^*_d$. 

We can alternatively interpret
 the rejection sampling procedure above as occurring in the latent space $z$. 
In this interpretation, we
first sample $z$ from $p(z)$, and then perform rejection sampling on $z$ with acceptance probability $e^{d(G(z))}\,/\,(MZ_0)$.
Only once a latent sample $z$ has been accepted do we generate the pixel level sample $x=G(z)$. 

This rejection sampling procedure on $z$ induces a new probability distribution $p_t(z)$.
To explicitly compute this distribution we need to 
conceptually reverse the definition of rejection sampling.
We formally write down the ``reverse'' lemma of rejection sampling as Lemma 1, to be used in our main theorem. 

\begin{lemma} 
On space $X$ there is a probability distribution $p(x)$. $r(x): X\rightarrow[0,1]$ is a measurable function on $X$.  We consider sampling from $p$, accepting with probability $r(x)$, and repeating this procedure until a sample is accepted. We denote the resulting probability measure of the accepted samples $q(x)$. Then we have $q(x) = p(x) r(x)\,/\,Z ,\text{where} \ Z= \mathbb{E}_p[r(x)] $.
\end{lemma}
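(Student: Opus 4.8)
The plan is to identify $q$ by computing, for an arbitrary measurable set $A \subseteq X$, the probability that the first accepted sample lies in $A$, and then to recognize the resulting integrand as the claimed density $p(x)r(x)/Z$. Since a probability measure on $X$ is determined by its values on all measurable sets, establishing $P(\text{accepted}\in A)=\int_A p(x)r(x)/Z\,dx$ for every measurable $A$ suffices to conclude $q(x)=p(x)r(x)/Z$.

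First I would set up the per-round structure. Each round consists of drawing $x \sim p$ independently and then accepting with probability $r(x)$, so the single-round acceptance probability is $Z = \int_X p(x)\, r(x)\, dx = \mathbb{E}_p[r(x)]$, and the probability that one round accepts with its value landing in $A$ is $\int_A p(x) r(x)\, dx$. The rounds are i.i.d., and the overall accepted sample is precisely the value produced in the first round that results in acceptance.

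Next I would sum over the (geometric) index of the accepting round. The event ``the accepted sample lies in $A$'' is the disjoint union over $k \geq 1$ of the events ``the first $k-1$ rounds reject and the $k$-th round accepts with value in $A$,'' each of probability $(1-Z)^{k-1}\int_A p(x)r(x)\,dx$. Summing the geometric series $\sum_{k\ge 1}(1-Z)^{k-1}=1/Z$ yields $P(\text{accepted}\in A)=\frac{1}{Z}\int_A p(x) r(x)\, dx$, which is exactly the claim.

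The main obstacle is the degenerate case $Z=0$ (equivalently $r=0$ $p$-almost everywhere), in which the procedure never terminates and $q$ is undefined; I would therefore flag $Z>0$ as the implicit nontriviality assumption needed for convergence of the series and for almost-sure termination. The remaining care is measure-theoretic rather than conceptual: justifying the independence across rounds, the interchange of sum and integral (Tonelli applies since all terms are nonnegative), and the characterization of $q$ by its values on all measurable $A$. As a shorter alternative I would mention bypassing the geometric sum entirely: by the memorylessness of i.i.d. trials the distribution of the accepted value equals the single-round law conditioned on acceptance, so $P(x\in A \mid \text{accept}) = \frac{\int_A p(x) r(x)\, dx}{Z}$ by Bayes' rule, giving the result immediately.
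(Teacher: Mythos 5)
Your proof is correct, but it takes a genuinely different route from the paper's. You argue from first principles: you decompose the event ``the accepted sample lies in $A$'' over the index $k$ of the first accepting round, sum the geometric series $\sum_{k\ge 1}(1-Z)^{k-1}=1/Z$, and read off the density $p(x)r(x)/Z$ (with the equivalent Bayes shortcut that the accepted value's law is the single-round law conditioned on acceptance). The paper instead runs the argument in reverse: it takes as known the correctness of \emph{standard} rejection sampling --- that drawing from proposal $p$ and accepting with probability $q(x)/\bigl(Mp(x)\bigr)$, where $M\geq q(x)/p(x)$ for all $x$, produces samples from $q$ --- and then verifies that the given acceptance function $r(x)$ is exactly of this form for the target $q=pr/Z$ once one chooses the envelope constant $M=1/Z$, which is admissible precisely because $r(x)\leq 1$. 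So the paper's proof is a short reduction to an external, presupposed theorem, while yours is self-contained: it makes the i.i.d.\ round structure and the sum--integral interchange explicit, and it also flags the degenerate case $Z=0$ (the procedure never terminates), a nontriviality assumption the paper leaves implicit. Either argument suffices for the way the lemma is used in the main theorem.
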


Namely, we have the prior proposal distribution $p_0(z)$ and an acceptance probability $r(z)=e^{d(G(z))}/\,(MZ_0)$. We want to compute the distribution after the rejection sampling procedure with $r(z)$. With Lemma 1, we can see that $p_t(z) = p_0(z)r(z)\,/\,Z'$. We expand on the details in our main theorem.


\subsection{Main Theorem}

\begin{theorem}
Assume $p_d$ is the data generating distribution, and $p_g$ is the generator distribution induced by the generator $G: \mathcal{Z}\rightarrow \mathcal{X}$, where $\mathcal{Z}$ is the latent space with prior distribution $p_0(z)$. Define Boltzmann distribution $p_d^*=e^{\log p_g(x)+d(x)}/\,Z_0$, where $Z_0$ is the normalization constant. 

Assume $p_g$ and $p_d$ have the same support. We address the case when this assumption does not hold in Corollary \ref{cor noise}. 
Further, let $D(x)$ be the  discriminator, and $d(x)$ be the logit of $D$, namely $D(x)=\sigma\left(d(x)\right)$. We define the energy function $E(z)= -\log p_0(z)-d(G(z))$, and its Boltzmann distribution $p_t(z)=e^{-E(z)}/\,Z$. Then we have:
\begin{enumerate}
\setlength{\parskip}{0pt}
  \setlength{\parsep}{0.3pt}
    \item $p^*_d = p_d$ when $D$ is the optimal discriminator. 
    \item If we sample $z\sim p_t$, and $x = G(z)$, then we have $x\sim p^*_d$. Namely, the induced probability measure 
    $G \circ p_t = p^*_d$.
\end{enumerate}
\end{theorem}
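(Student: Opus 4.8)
The plan is to dispatch the two claims separately: claim~1 by direct substitution of the optimal discriminator, and claim~2 by combining Lemma~1 with the pushforward (change-of-variables) identity that defines $p_g$. For claim~1, I would use the optimal form $D^*(x) = p_d(x)/(p_d(x)+p_g(x))$, whose logit is $d^*(x) = \log\left(p_d(x)/p_g(x)\right)$, so that $e^{d^*(x)} = p_d(x)/p_g(x)$. Substituting into $p_d^* = p_g(x)e^{d^*(x)}/Z_0$ cancels $p_g$ and leaves $p_d^* = p_d/Z_0$; since both sides integrate to one, $Z_0 = 1$ and $p_d^* = p_d$. This step is routine.

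For claim~2 I would follow the rejection-sampling route the paper has set up. Read the procedure as drawing $z \sim p_0$, forming $x = G(z)$, and accepting with probability $r(z) := e^{d(G(z))}/(M Z_0)$, a quantity that depends on $z$ only through $G(z)$. Applying Lemma~1 with proposal $p_0$ and acceptance $r$, the accepted latents have law $q(z) = p_0(z)r(z)/Z'$ with $Z' = \mathbb{E}_{p_0}[r(z)]$. The pivotal computation is $\mathbb{E}_{p_0}[e^{d(G(z))}] = \mathbb{E}_{p_g}[e^{d(x)}] = Z_0$, which holds because $p_g$ is by definition the pushforward $G_{\#}p_0$, so expectations of any function of $G(z)$ under $p_0$ equal expectations of that function under $p_g$. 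This gives $Z' = 1/M$, hence $q(z) = p_0(z)e^{d(G(z))}/Z_0 = p_t(z)$, simultaneously identifying the accepted-latent law with $p_t$ and showing that the latent normalizer satisfies $Z = Z_0$. Because the accept/reject decision is a function of $x = G(z)$ alone, the same draws couple the two schemes: the accepted pixels are exactly $G$ applied to the accepted latents, and since the accepted pixels follow $p_d^*$ by the pixel-space argument given before the theorem, I conclude $G \circ p_t = p_d^*$.

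I would also record a cleaner direct verification, testing against an arbitrary bounded measurable $\phi$ on $\mathcal{X}$: write $\mathbb{E}_{z\sim p_t}[\phi(G(z))] = Z^{-1}\int \phi(G(z))\,p_0(z)\,e^{d(G(z))}\,dz$, apply the pushforward identity to obtain $Z^{-1}\int \phi(x)\,p_g(x)\,e^{d(x)}\,dx = (Z_0/Z)\,\mathbb{E}_{p_d^*}[\phi]$, and take $\phi \equiv 1$ to force $Z = Z_0$, whence equality of the two laws. The point requiring care — and where I expect the real subtlety to lie — is that $G$ need not be injective and $\mathcal{Z}, \mathcal{X}$ may have different dimensions, so $p_g(x)$ may possess no Lebesgue density and no Jacobian change-of-variables is available. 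The proof must therefore never evaluate $p_g(x)$ directly and must lean entirely on the pushforward/expectation identity, which is precisely what makes the latent reformulation tractable. The same-support assumption (secured by the added Gaussian noise) is what guarantees $M = \max_x p_d^*(x)/p_g(x)$ is finite, so the rejection scheme is well posed to begin with.
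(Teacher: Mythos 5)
Your proposal is correct, and its core follows the same route as the paper's own proof: claim~1 by substituting the optimal discriminator's logit $d^*(x)=\log\bigl(p_d(x)/p_g(x)\bigr)$ and fixing $Z_0=1$ by normalization, and claim~2 by reinterpreting the pixel-space rejection sampler (proposal $p_g$, acceptance $e^{d(x)}/(MZ_0)$) as a latent-space rejection sampler and invoking Lemma~1. Where you go beyond the paper is in the bookkeeping and in the supplementary argument. The paper simply writes $p_t(z)=p_0(z)r(z)/C$ for an unspecified normalizer $C$ and asserts the conclusion, whereas you explicitly compute $Z'=\mathbb{E}_{p_0}[r]=1/M$ via the pushforward identity $\mathbb{E}_{p_0}[e^{d(G(z))}]=\mathbb{E}_{p_g}[e^{d(x)}]=Z_0$, which pins down $Z=Z_0$ and identifies the accepted-latent law with $p_t$ exactly; you also state explicitly why the pixel and latent schemes are coupled (the acceptance probability depends on $z$ only through $G(z)$), a point the paper leaves implicit. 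Your direct verification against test functions $\phi$ is in fact a cleaner, fully self-contained proof of claim~2 that bypasses rejection sampling entirely; the paper has no analogue of it, and it is the argument that survives most gracefully when $p_g$ is viewed only as the pushforward measure $G_{\#}p_0$ rather than a Lebesgue density, which is precisely the subtlety you correctly flag.
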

\vspace{-3mm}
\begin{proof}
Please see Appendix~\ref{ap:proof}.
\end{proof}
\vspace{-3mm}
Interestingly, $p_t(z)$ has the form of an energy-based model, $p_t(z)=e^{-E(z)}/\,Z'$, with {\it tractable} energy function $E(z)=-\log p_0(z)-d(G(z))$. In order to sample from this Boltzmann distribution, one can use an MCMC sampler, such as Langevin dynamics or Hamiltonian Monte Carlo. We defer the proofs and the MCMC algorithm to our Supplemental Material.

\subsection{Sampling Wasserstein GANs with Langevin Dynamics}

Wasserstein GANs are different from original GANs in that they target the Wassertein loss. Although when the discriminator is trained to optimality, the discriminator can recover the Kantorovich dual \citep{arjovsky2017wasserstein} of the optimal transport between $p_g$ and $p_d$, the target distribution $p_d$ cannot be exactly recovered using the information in $p_g$ and $D$\footnote{In \citet{tanaka2019discriminator}, the authors claim that it is possible to recover $p_d$ with $D$ and $p_g$ in WGAN in certain metrics, but we show in the Appendix that their assumptions don't hold and in the $L^1$ metric, which WGAN uses, it is not possible to recover $p_d$. }. However, in the following we show that in practice, the optimization of WGAN can be viewed as an approximation of an energy-based model, which can also be sampled with our method. 

The objectives of Wasserstein GANs can be summarized as:
\begin{eqnarray}
\label{eq:wgan}
     L_D = \mathbb{E}_{p_g}[D(x)] - \mathbb{E}_{p_d}[ D(x)] \quad,\quad L_G = -\mathbb{E}_{p_0}[D(G(z))]
\end{eqnarray}
where $D$ is restricted to be a $K$-Lipschitz function. 

On the other hand, consider a new energy-based generative model (which also has a generator and a discriminator) trained with the following objectives (for detailed algorithm, please refer to our Supplemental Material):
\begin{enumerate}
    \item Discriminator training phase (D-phase). Unlike GANs, our energy-based model tries to match the distribution $p_t(x)=p_g(x)e^{D_\phi(x)}/\,Z$ with the data distribution $p_d$, where $p_t(x)$ can be interpreted as an EBM with energy $D_\phi(x) - \log p_g(x)$. In this phase, the generator is kept fixed, and 
    the discriminator is trained. 
\item Generator training phase (G-phase). The generator is trained such that $p_g(x)$ matches $p_t(x)$, in this phase we treat $D$ as fixed and train $G$.
\end{enumerate} 

In the D-phase, we are training an EBM with data from $p_d$. 
The gradient of the KL-divergence (which is our loss function for D-phase) can be written as~\citep{mackay2003information}:
\begin{equation}
\label{eq:kld_grad}
    \nabla_\phi \text{KL}(p_d||p_t) = \mathbb{E}_{p_t}[\nabla_\phi D(x)] - \mathbb{E}_{p_d}[\nabla_\phi D(x)]
\end{equation}
Namely we are trying to maximize $D$ on real data and trying to minimize it on fake data. Note that the fake data distribution $p_t$ is a function of both the generator and discriminator, and cannot be sampled directly. As with other energy-based models, we can use an MCMC procedure such as Langevin dynamics to generate samples from $p_t$~\citep{tieleman2008training}.

In the G-phase, we can train the model with the gradient of KL-divergence $\text{KL}(p_g\mid\mid p_t')$ as our loss. Let  $p_t'$ be a fixed copy of $p_t$, we can compute the gradient as (see the Appendix for more details):
\begin{equation}
    \nabla_\theta \text{KL}(p_g\mid\mid p_t') =  -\mathbb{E}[\nabla_\theta D(G(z))]
    .
        \label{eq WGAN disc KL}
\end{equation}
Note that the losses above coincide with what we are optimizing in WGANs, with two differences:
\begin{enumerate}
    \item In WGAN, we optimize $D$ on $p_g$ instead of $p_t$. This 
    may not be
    a big difference in practice, since 
    as training progresses $p_t$ is expected to approach $p_g$,
    as the optimizing loss for the generator 
    explicitly acts
    to bring $p_g$ closer to $p_t$ (Equation \ref{eq WGAN disc KL}). Moreover, 
    it has recently been found in LOGAN~\citep{wu2019logan} that 
    optimizing $D$ on $p_t$ rather than $p_g$ can lead to better performance.
    \item In WGAN, we impose a Lipschitz constraint on $D$. This constraint can be viewed as a smoothness regularizer. 
    Intuitively it will make the distribution $p_t(x)=p_g(x)e^{-D_\phi(x)}/\,Z$ more ``flat'' than $p_d$, but $p_t(x)$ (which lies in a distribution family parameterized by $D$) remains an approximator to $p_d$ subject to this constraint.
\end{enumerate}

Thus, we can conclude that for a Wasserstein GAN with discriminator $D$, WGAN approximately optimizes the KL divergence of $p_t = p_g(x)e^{-D(x)}/\,Z$ with $p_d$, with the constraint that $D$ is $K$-Lipschitz. This suggests that one can also perform DDLS 
on the WGAN latent space to generate improved samples, using an 
energy function $E(z)= -\log p_0(z)-D(G(z))$.

\subsection{Practical Issues and the Mode Dropping Problem}
Mode dropping is a major problem in training GANs. In our main theorem it is assumed that $p_g$ and $p_d$ have the same support. We also assumed that $G:\mathcal{Z}\rightarrow \mathcal{X}$ is a deterministic function. Thus, if $G$ cannot recover some of the modes in $p_d$, $p_d^*$ also cannot recover these modes. 

However, we can partially solve the mode dropping problem by introducing an additional Gaussian noise $z'\sim N(0,1;z')=p_1(z')$ to the output of the generator, namely we define the new deterministic generator $G^*(z,z')=G(z)+\epsilon z'$. We treat $z'$ as a part of the generator, and do DDLS on joint latent variables $(z,z')$. The Langevin dynamics on this joint energy will help the model to move data points that are a little bit off-mode to the data manifold, and we have the follwing Corollary:
\begin{corollary}
\label{cor noise}
Assume $p_d$ is the data generating distribution with small Gaussian noise added. The generator $G: \mathcal{Z}\rightarrow \mathcal{X}$ is a deterministic function, where $\mathcal{Z}$ is the latent space endowed with prior distribution $p_0(z)$. Assume $z'\sim p_1(z')=N(0,1;z)$ is an additional Gaussian noise variable with $\dim z'= \dim \mathcal{X}$. Let $\epsilon>0$, denote the distribution of the extended generator $G^*(z,z')=G(z)+\epsilon z'$ as $p_g$.  $D(x)$ is the  discriminator trained between $p_g$ and $p_d$. Let $d(x)$ be the logit of $D$, namely $D(x)=\sigma\left(d(x)\right)$. Define $p_d^*=e^{\log p_g(x)+d(x)}/\,Z_0$, where $Z_0$ is the normalization constant. We define the energy function in the extended latent space $E(z,z')= -\log p_0(z)-\log p_1(z')-d(G^*(z,z'))$, and its Boltzmann distribution $p_t(z,z')=e^{-E(z,z')}/\,Z$. Then we have:
\vspace{-2mm}
\begin{enumerate}
\setlength{\parskip}{0pt}
  \setlength{\parsep}{0.3pt}
    \item $p^*_d = p_d$ when $D$ is the optimal discriminator. 
    \item If we sample $(z,z')\sim p_t$, and $x = G^*(z,z')$, then we have $x\sim p^*_d$. Namely, the induced probability measure $G^* \circ p_t = p^*_d$.
\end{enumerate}
\end{corollary}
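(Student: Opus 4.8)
The plan is to recognize that this corollary is simply the main theorem applied to an enlarged latent space, so the whole argument reduces to the theorem already in hand. First I would introduce the extended latent variable $\tilde z = (z,z')$, the extended prior $\tilde p_0(\tilde z) = p_0(z)\,p_1(z')$ on $\tilde{\mathcal{Z}} = \mathcal{Z}\times\mathbb{R}^{\dim\mathcal{X}}$, and the extended deterministic generator $\tilde G(\tilde z) = G^*(z,z') = G(z) + \epsilon z'$. Under these identifications the energy $E(z,z')$ in the statement is exactly $-\log\tilde p_0(\tilde z) - d(\tilde G(\tilde z))$, and $p_t(z,z')$ is exactly the Boltzmann distribution of the theorem. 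The one sanity check I need is that the pushforward of $\tilde p_0$ through $\tilde G$ equals the distribution called $p_g$ in the corollary; this holds by definition, since $p_g$ is defined to be the law of $G^*(z,z')$ when $(z,z')\sim p_0\times p_1$.

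For Part 1 I would invoke only the optimal-discriminator relation, which is generator-agnostic. If $D$ is optimal then $d(x) = \log\bigl(p_d(x)/p_g(x)\bigr)$, so $e^{d(x)} = p_d(x)/p_g(x)$ and $p_d^* = p_g(x)e^{d(x)}/Z_0 = p_d(x)/Z_0$; normalization of both sides forces $Z_0 = 1$, hence $p_d^* = p_d$. Nothing here uses the form of the generator, so this step is verbatim the theorem's.

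For Part 2 I would apply the theorem directly to the triple $(\tilde{\mathcal{Z}}, \tilde p_0, \tilde G)$ to obtain $\tilde G\circ p_t = p_d^*$, that is $G^*\circ p_t = p_d^*$. If instead a self-contained argument is wanted, I would run the rejection-sampling interpretation: sampling $\tilde z\sim\tilde p_0$ and accepting with probability $r(\tilde z) = e^{d(G^*(\tilde z))}/(MZ_0)$ yields $\tilde z\sim p_t$ by Lemma 1. The key observation is that $r(\tilde z)$ depends on $\tilde z$ only through $x = G^*(\tilde z)$, so the accepted pixel samples $x = G^*(\tilde z)$ are distributed exactly as rejection sampling applied directly to $p_g$ (the law of $x$) with acceptance $e^{d(x)}/(MZ_0)$; Lemma 1 then gives accepted-$x$ law $\propto p_g(x)e^{d(x)} = Z_0\,p_d^*(x)$, hence equal to $p_d^*$.

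The main obstacle is the non-injectivity of $G^*$: because $\dim\tilde{\mathcal{Z}} = \dim\mathcal{Z} + \dim\mathcal{X} > \dim\mathcal{X}$, a naive change-of-variables on the pushforward measure is unavailable. The rejection-sampling route sidesteps this cleanly, since it only requires the acceptance ratio to factor through $G^*$ (which it does) rather than any invertibility. I would also check that $M = \max_x p_d^*(x)/p_g(x)$ is finite, and this is precisely where the added Gaussian noise is used: convolving with $N(0,\epsilon^2 I)$ makes $p_g$ smooth and full-support, so the ratio is bounded on the compact $x$-domain, exactly as in the rejection-sampling discussion preceding the theorem.
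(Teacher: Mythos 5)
Your proposal is correct and takes essentially the same route as the paper: the paper's own proof is a one-liner that views $G^*$ with the extended latent $(z,z')$ and product prior $p_0(z)p_1(z')$ as the generator of Theorem~1, notes that the added Gaussian noise gives $p_g$ and $p_d$ common support, and invokes the theorem. Your additional unrolled rejection-sampling argument and the check that the acceptance probability factors through $G^*$ are just the theorem's own proof restated in the extended space, so they fill in detail the paper leaves implicit rather than constituting a different method.
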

\vspace{-3mm}
\begin{proof}
Please see Appendix~\ref{ap:proof}.
\end{proof}

\section{Related Work}
Previous work has considered utilizing the discriminator to achieve better sampling for GANs. Discriminator rejection sampling  \citep{azadi2018discriminator} and Metropolis-Hastings GANs \citep{pmlr-v97-turner19a-mhgan} use $p_g$ as the proposal distribution and $D$ as the criterion of acceptance or rejection. 
However, these methods are inefficient as they may need to reject a wechlot of samples. Intuitively, one major drawback of these methods is that since they operate in the pixel space, their algorithm can use discriminators to reject samples when they are bad, but cannot easily guide latent space updates using the discriminator which would improve these samples. The advantage of DDLS over DRS or MH-GAN is similar to the advantage of SGD over zero-th order optimization algorithms.

Trained classifiers have similarly been used to correct probabilistic models in other contexts \citep{cranmer2015approximating}. Discriminator optimal transport (DOT) \citep{tanaka2019discriminator} is another way of sampling GANs. They use deterministic gradient descent in the latent space to get samples with higher $D$-values, However, since $p_g$ and $D$ cannot recover the data distribution exactly, DOT has to make the optimization local in a small neighborhood of generated samples, which hurts the sample performance. Also, DOT is not guaranteed to converge to the data distribution even under ideal assumptions ($D$ is optimal). 

Other previous work considered the usage of probabilistic models defined jointly by the generator and discriminator. In \cite{Deng2020Residual}, the authors use the idea of training an EBM defined jointly by a generator and an additional critic function in the text generation setting. \cite{NIPS2019_9286} uses an additional discriminator as a bias corrector for generative models via importance weighting. \cite{pmlr-v84-grover18a} considered rejection sampling in latent space in encoder-decoder models.  

Energy-based models~\citep{hinton2006reducing,fenchel-minimax,gao2020flow,gao2018learning,xie2018cooperative,han2017alternating,han2019divergence,han2020joint,pang2020learning} have gained significant attention in recent years. Most work focuses on the maximum likelihood learning of energy-based models \citep{lecun2006tutorial, du2019implicit, salakhutdinov2009deep}. 
Other work has built new connections between energy based models and classifiers \citep{grathwohl2019your}. 
The primary difficulty in training energy-based models comes from effectively estimating and sampling the partition function. The contribution to training from the partition function can be estimated via MCMC \citep{du2019implicit, hinton2002training, nijkamp2019learning}, via training another generator network \citep{kim2016deep, kumar2019maximum}, or via surrogate objectives to maximum likelihood \citep{hyvarinen2005estimation,GutmannM2010,sohl2011new}. The connection between GANs and EBMs has been studied by many authors~\citep{zhao2016energy, finn2016connection, dai2019exponential,zhai2019adversarial}. Our paper can be viewed as establishing a new connection between GANs and EBMs which allows efficient latent MCMC sampling. 
\section{Experimental results}
\label{sec exp}
In this section we present a set of experiments demonstrating the effectiveness of our method on both synthetic and real-world datasets.  In section \ref{sec:synthetic} we illustrate how the proposed method, DDLS, can improve the distribution modeling of a trained GAN and compare with other baseline methods. In section~\ref{sec:cifar} 
we show that DDLS can improve the sample quality on real world datasets, both qualitatively and quantitatively.

\subsection{Synthetic dataset}
\label{sec:synthetic}

\begin{figure}[htbp]
\begin{minipage}[t]{0.48\linewidth}
    \includegraphics[width=\linewidth]{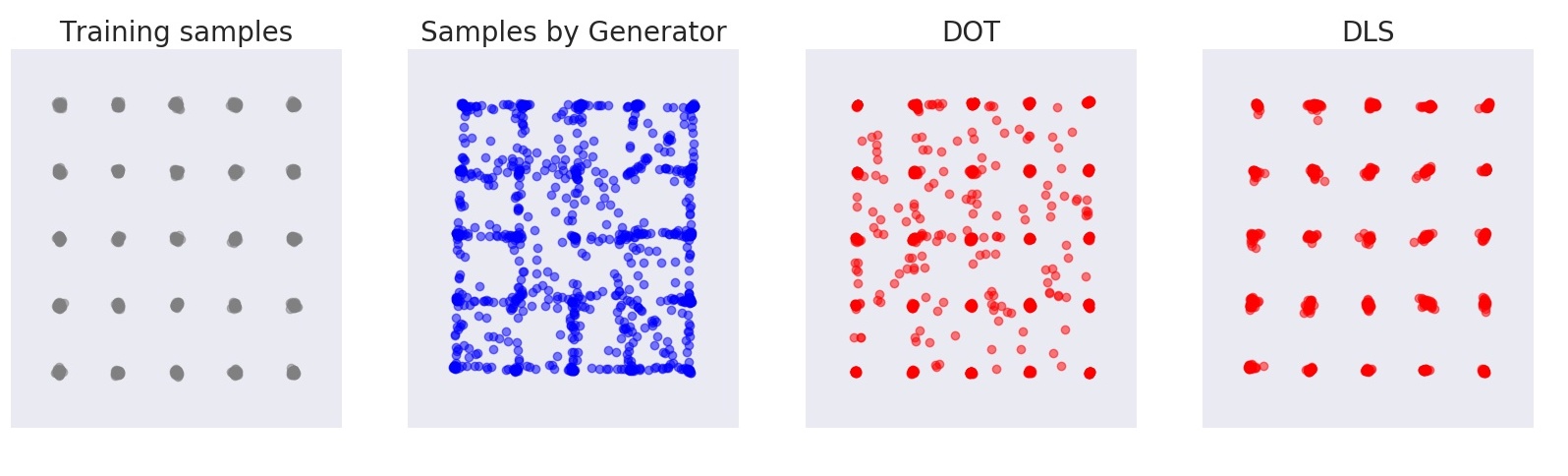}
    \caption{DDLS, generator alone, and generator + DOT, on 2d mixture of Gaussians distribution}
    \label{fig:synthetic1}
\end{minipage}%
    \hfill%
\begin{minipage}[t]{0.48\linewidth}
    \includegraphics[width=\linewidth]{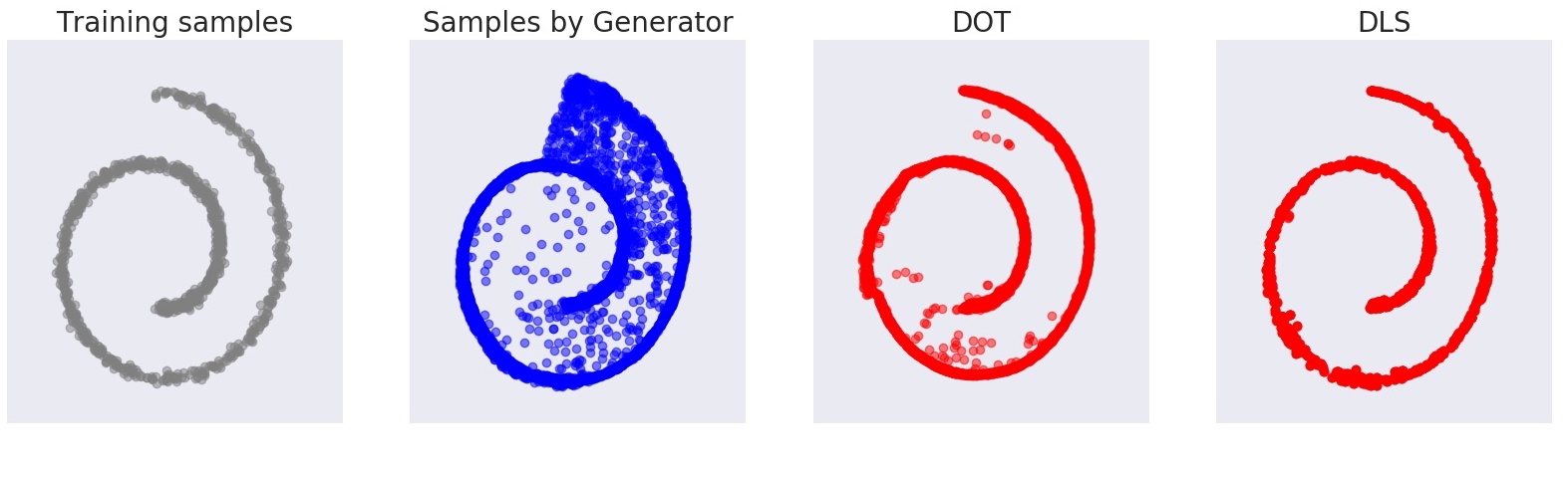}
    \caption{DDLS, the generator alone, and generator + DOT, on the swiss roll dataset.}
    \label{fig:synthetic2}
\end{minipage} 
\end{figure}

\begin{table}[th]
  \caption{DDLS suffers less from mode dropping when modeling the 2d synthetic distribution in Figure \ref{fig:synthetic1}. Table shows number of recovered modes, and fraction of ``high quality'' (less than four standard deviations from mode center) recovered modes.}
  \label{table:mog}
  \centering
  \begin{tabular}{lccc}
    \toprule
    & $\#$ recovered modes & $\%$ ``high quality''   & std  ``high quality''\\ 
    \midrule
    Generator only   & $24.8 \pm 0.2$ & $70 \pm 9$ & $0.11 \pm 0.01$ \\
    DRS & $24.8 \pm 0.2$ & $ 90 \pm 2 $ & $ \mathbf{0.10 \pm 0.01}$ \\
    GAN w. DDLS & $24.8 \pm 0.2$ & $\mathbf{98 \pm 2}$ & $\mathbf{0.10 \pm 0.01}$ \\
    \bottomrule
  \end{tabular}
\end{table}

\begin{table}[th]
  \caption{DDLS has lower Earth Mover's Distance (EMD) to the true distribution for the 2d synthetic distribution in Figure \ref{fig:synthetic1}, and matches the performance of DOT on the Swiss Roll distribution.}
  \label{table:sws}
  \centering
  \resizebox{.65\columnwidth}{!}{
  \begin{tabular}{lccc}
    \toprule
    & EMD 25-Gaussian & EMD Swiss Roll\\ 
    \midrule
    Generator only(\citep{tanaka2019discriminator})   & $0.052(08)$ & $0.021(05)$ \\
    DOT(\citep{tanaka2019discriminator})     & $0.052(10)$ & $\mathbf{0.020}(06)$ \\
    Generator only(Our imple.)   & $0.043(04)$ & $0.026(03)$ \\
    GAN as EBM with DDLS     & $\mathbf{0.036}(04)$ & $\mathbf{0.020}(05)$ \\
    \bottomrule
  \end{tabular}
  }
\end{table}

Following the same setting used in \citep{azadi2018discriminator,pmlr-v97-turner19a-mhgan,tanaka2019discriminator}, we apply DDLS to a WGAN model trained on two synthetic datasets, 25-gaussians and Swiss Roll, and investigate the effect and performance of the proposed sampling method.

\textbf{Implementation details}
We follow the same synthetic experiments design as in DOT~\citep{tanaka2019discriminator}, while  parameterizing the prior with a standard normal distribution instead of a uniform distribution. Please refer to~\ref{ap:synthetic} for more details.

\textbf{Qualitative results} With the trained generator and discriminator, we generate $5000$ samples from the generator, then apply DDLS  in latent space to obtain enhanced samples. We also apply the DOT method as a baseline. All results are depicted in Figure \ref{fig:synthetic1} and Figure \ref{fig:synthetic2} together with the target dataset samples. For the 25-Gaussian dataset we can see that DDLS recovered and preserved all modes while significantly eliminating spurious modes compared to a vanilla generator and DOT. For the Swiss Roll dataset we can also observe that DDLS successfully improved the distribution and recovered the underlying low-dimensional manifold of the data distribution. This qualitative evidence supports the hypothesis that our GANs as energy based model formulation outperforms the noisy implicit distribution induced by the generator alone.

\textbf{Quantitative results} We first examine the performance of DDLS quantitavely by using the metrics proposed by DRS~\citep{azadi2018discriminator}. We generate $10,000$ samples 
with the DDLS algorithm, and each sample is assigned to its closest mixture component. A sample is of ``high quality'' if it is within four standard deviations of its assigned mixture component, and a mode is successfully ``recovered'' if at least one high-quality sample is assigned to it.

 As shown in Table \ref{table:mog}, our proposed model achieves a higher ``high-quality'' ratio. We also investigate the distance between the distribution induced by our GAN as EBM formulation and the true data distribution. We use the Earth Mover's Distance (EMD) between the two corresponding empirical distributions as a surrogate, as proposed in DOT~\citep{tanaka2019discriminator}. As shown in Table \ref{table:sws}, the EMD between our sampling distribution and the ground-truth distribution is significantly below the baselines.  Note that we use our own re-implementation, and numbers differ slightly from those previously published. 
 
\begin{table}
  \caption{Inception and FID scores on CIFAR-10 and CelebA~(grouped by corresponding baseline modles), showing a substantial quantitative advantage from DDLS, compared to MH-GAN~\citep{pmlr-v97-turner19a-mhgan}, DRS~\citep{azadi2018discriminator} and DOT~\citep{tanaka2019discriminator} using the same architecture.} 
  \label{table:cifar}
  \centering
  \resizebox{.55\columnwidth}{!}{
  \begin{tabular}{lccc}
    \toprule
        Model & \multicolumn{2}{c}{CIFAR-10} & CelebA\\
        \midrule
        ~ & Inception & FID & Inception\\
        \midrule
        DCGAN w/o DRS or MH-GAN & $2.8789$ & - & $2.3317$\\
        DCGAN w/ DRS(cal)~\citep{azadi2018discriminator} & $3.073 $ & - & $2.869$\\
        DCGAN w/ MH-GAN(cal)~\citep{pmlr-v97-turner19a-mhgan} & $3.379 $ & - & $3.106$\\
        WGAN w/o DRS or MH-GAN & $3.0734$ & - & $2.7876$\\
        WGAN w/ DRS(cal)~\citep{azadi2018discriminator} & $3.137 $ & - & $2.861$\\
        WGAN w/ MH-GAN(cal)~\citep{pmlr-v97-turner19a-mhgan} & $3.305 $ & - & $2.889$\\
        \midrule
        \textbf{Ours: DCGAN w/ DDLS} & $\mathbf{3.681}$ & - & $\mathbf{3.372}$\\
        \textbf{Ours: WGAN w/ DDLS} & $\mathbf{3.614}$ & - & $\mathbf{3.093}$\\
        \midrule
        PixelCNN~\citep{pixelcnn} & $4.60$ & $65.93$ & -\\
        EBM~\citep{du2019implicit} & $6.02$ & $40.58$ & -\\
        WGAN-GP~\citep{gulrajani2017improved} & $7.86 \pm .07$ & $36.4$ & -\\
        ProgressiveGAN~\citep{progressive-gan} & $8.80\pm .05$ & - & -\\
        NCSN~\citep{ncsn}  & $8.87 \pm .12$ & $25.32$ & -\\
        ResNet-SAGAN w/o DOT & $7.85 \pm .11$ & $21.53$ & -\\
        ResNet-SAGAN w/ DOT & $8.50 \pm .12$ & $19.71$ & -\\
        \midrule
        SNGAN w/o DDLS & $8.22\pm .05$ & $21.7$ & -\\
        Ours: SNGAN w/ DDLS & $9.05 \pm .11$  & $15.76$ & -\\
        \textbf{Ours: SNGAN w/ DDLS(cal)} & $\mathbf{9.09 \pm 0.10}$ & $\mathbf{15.42}$ & -\\
        \bottomrule
  \end{tabular}
}
\end{table}

\subsection{CIFAR-10 and CelebA}
\label{sec:cifar}
In this section we evaluate the performance of the proposed DDLS method on the CIFAR-10 dataset and CelebA dataset. 

\textbf{Implementation details}
We provide detailed description of baseline models, DDLS hyper-parameters and evaluation protocol in~\ref{ap:cifar}.

\textbf{Quantitative results} We evaluate the quality and diversity of generated samples via the Inception Score \citep{improved_gan} and Fréchet Inception Distance (FID) \citep{fid-ttur}. In Table~\ref{table:cifar}, we show the Inception score improvements from DDLS on CIFAR-10 and CelebA, compared to MH-GAN~\citep{pmlr-v97-turner19a-mhgan} and DRS~\citep{azadi2018discriminator}, following the same evaluation protocol and using the same baseline models~(DCGAN and WGAN) in \citep{pmlr-v97-turner19a-mhgan}. On CIFAR-10, we applied DDLS to the unconditional generator of SN-GAN to generate $50000$ samples and report all results in Table \ref{table:cifar}. We found that the proposed method significantly improves the Inception Score of the baseline SN-GAN model from $8.22$ to $9.09$ and reduces the FID from $21.7$ to $15.42$.  Our unconditional model outperforms previous state-of-the-art GANs and other sampling-enhanced GANs \citep{azadi2018discriminator,pmlr-v97-turner19a-mhgan,tanaka2019discriminator} and even approaches the performance of conditional BigGANs~\citep{biggan} which achieves an Inception Score $9.22$ and an FID of $14.73$, \emph{without the need of additional class information, training and parameters}. 

\textbf{Qualitative results} We illustrate the process of Langevin dynamics sampling in latent space in Figure \ref{fig:cifar-dynamics} by generating samples for every $10$ iterations. We find that our method helps correct the errors in the original generated image, and makes changes towards more semantically meaningful and sharp output by leveraging the pre-trained discriminator. We include more generated samples for visualizing the Langevin dynamics in the appendix. To demonstrate that our model is not simply 
memorizing the CIFAR-10 dataset, we find the nearest neighbors of generated samples in the training dataset and show the results in Figure \ref{fig:cifar-nearest-neighbor}.



\textbf{Mixing time evaluation} MCMC sampling methods often suffer from extremely long mixing times, especially for high-dimensional multi-modal data. For example, more than $600$ MCMC iterations are need to obtain the most performance gain in MH-GAN~\citep{pmlr-v97-turner19a-mhgan} on real data. We demonstrate the sampling efficiency of our method by showing that we can expect a much shorter time to achieve competitive performance by migrating the Langevin sampling process to the latent space, compared to sampling in high-dimensional multi-modal pixel space. We evaluate the Inception Score and the energy function for every $10$ iterations of Langevin dynamics and depict the results in Figure ~\ref{fig:cifar-inceptino-score} in Appendix.

\subsection{ImageNet}

\begin{table}[th]
  \caption{Inception score for ImageNet, showing the quantitative advantage of DDLS.}
  \label{tb:imagenet}
  \centering
  \resizebox{.35\columnwidth}{!}{
  \begin{tabular}{lcc}
        \toprule
        Model & Inception \\
        \midrule
        SNGAN~\citep{sngan}  & $36.8$ \\
        cGAN w/o DOT & $36.23$\\
        cGAN w/ DOT & $37.29$ \\
        \midrule
        Ours: SNGAN w/ DDLS & $40.2$   \\
        \bottomrule
    \end{tabular}
  }
\end{table}

In this section we evaluate the performance of the proposed DDLS method on the ImageNet dataset. 

\textbf{Implementation details} As with CIFAR-10, we adopt the Spectral Normalization GAN (SN-GAN) \citep{sngan} as our baseline GAN model. We take the publicly available pre-trained models of SN-GAN and apply DDLS. 
Fine tuning is performed on the discriminator, as described in Section \ref{sec:cifar}.
Implementation choices are otherwise the same as for CIFAR-10, with additional details in the appendix. 
We show the quantitative results in Table~\ref{tb:imagenet}, where we substantially outperform the baseline.

\section{Conclusion and Future Work}
In this paper, we have shown that a GAN's discriminator can enable better modeling of the data distribution with Discriminator Driven Latent Sampling (DDLS). The intuition behind our model is that learning a generative model to do structured generative prediction is usually more difficult than learning a classifier, so the errors made by the generator can be significantly corrected by the discriminator. The major advantage of DDLS is that it allows MCMC sampling in the latent space, which enables efficient sampling and better mixing. 

For future work, we are exploring the inclusion of additional Gaussian noise variables in each layer of the generator,  treated as latent variables, such that DDLS can be used to provide a correcting signal for each layer of the generator. We believe that this will lead to further sampling improvements. Also, prior work on VAEs has shown that learned re-sampling or energy-based transformation of their priors can be effective 
\citep{bauer2018resampled,lawson2019energy}.
It would thus be particularly interesting to explore whether VAE-based models can be improved by constructing an energy based model for their prior based on an auxiliary discriminator.

\section{Broader Impact}

The development of powerful generative models which can generate fake images, audio, and video which appears realistic is a source of acute concern~\citep{nytimesfake}, and can enable fake news or propaganda. 
On the other hand these same technologies also enable assistive technologies like text to speech \citep{wavenetTTS}, new art forms~\citep{engel2017neural,jones2017gangogh}, and new design technologies~\citep{zhu2016generative}. 

Our work enables the creation of more powerful generative models. 
As such it is multi-use, and may result in both positive and negative societal consequences. However, we believe that improving scientific understanding tends also to improve the human condition \citep{pinker2018enlightenment} -- so in the absence of a reason to expect specific harm, we believe that in expectation our work will have a positive impact on the world.

One potential benefit of our work is that it leads to better calibrated GANs, which are less likely to simply drop under-represented sample classes from their generated output. The tendency of machine learning models to produce worse outcomes for groups which are underrepresented in their training data (in terms of race, gender, or otherwise) is well documented~\citep{zou2018ai}. The use of our technique should produce generative models which are slightly less prone to this type of bias.



\begin{ack}
The authors are grateful to Ben Poole and the anonymous reviewers for proof-reading the paper and suggesting improvements. This work was supported by CIFAR and Compute Canada. 
\end{ack}

\bibliography{mybibfile.bib}

\begin{thebibliography}{65}
\providecommand{\natexlab}[1]{#1}
\providecommand{\url}[1]{\texttt{#1}}
\expandafter\ifx\csname urlstyle\endcsname\relax
  \providecommand{\doi}[1]{doi: #1}\else
  \providecommand{\doi}{doi: \begingroup \urlstyle{rm}\Url}\fi

\bibitem[Miyato et~al.(2018)Miyato, Kataoka, Koyama, and Yoshida]{sngan}
Takeru Miyato, Toshiki Kataoka, Masanori Koyama, and Yuichi Yoshida.
\newblock Spectral normalization for generative adversarial networks.
\newblock In \emph{6th International Conference on Learning Representations,
  {ICLR} 2018, Vancouver, BC, Canada, April 30 - May 3, 2018, Conference Track
  Proceedings}. OpenReview.net, 2018.
\newblock URL \url{https://openreview.net/forum?id=B1QRgziT-}.

\bibitem[Brock et~al.(2019)Brock, Donahue, and Simonyan]{biggan}
Andrew Brock, Jeff Donahue, and Karen Simonyan.
\newblock Large scale {GAN} training for high fidelity natural image synthesis.
\newblock In \emph{7th International Conference on Learning Representations,
  {ICLR} 2019, New Orleans, LA, USA, May 6-9, 2019}. OpenReview.net, 2019.
\newblock URL \url{https://openreview.net/forum?id=B1xsqj09Fm}.

\bibitem[Goodfellow et~al.(2014)Goodfellow, Pouget-Abadie, Mirza, Xu,
  Warde-Farley, Ozair, Courville, and Bengio]{goodfellow2014generative}
Ian Goodfellow, Jean Pouget-Abadie, Mehdi Mirza, Bing Xu, David Warde-Farley,
  Sherjil Ozair, Aaron Courville, and Yoshua Bengio.
\newblock Generative adversarial nets.
\newblock In \emph{Advances in neural information processing systems}, pages
  2672--2680, 2014.

\bibitem[Nguyen et~al.(2017)Nguyen, Clune, Bengio, Dosovitskiy, and
  Yosinski]{plugandplay}
Anh Nguyen, Jeff Clune, Yoshua Bengio, Alexey Dosovitskiy, and Jason Yosinski.
\newblock Plug {\&} play generative networks: Conditional iterative generation
  of images in latent space.
\newblock In \emph{2017 {IEEE} Conference on Computer Vision and Pattern
  Recognition, {CVPR} 2017, Honolulu, HI, USA, July 21-26, 2017}, pages
  3510--3520. {IEEE} Computer Society, 2017.
\newblock \doi{10.1109/CVPR.2017.374}.
\newblock URL \url{https://doi.org/10.1109/CVPR.2017.374}.

\bibitem[Dai et~al.(2017)Dai, Yang, Yang, Cohen, and
  Salakhutdinov]{dai2017good}
Zihang Dai, Zhilin Yang, Fan Yang, William~W Cohen, and Ruslan~R Salakhutdinov.
\newblock Good semi-supervised learning that requires a bad gan.
\newblock In \emph{Advances in neural information processing systems}, pages
  6510--6520, 2017.

\bibitem[Yi et~al.(2017)Yi, Zhang, Tan, and Gong]{Yi_2017_ICCV}
Zili Yi, Hao Zhang, Ping Tan, and Minglun Gong.
\newblock Dualgan: Unsupervised dual learning for image-to-image translation.
\newblock In \emph{The IEEE International Conference on Computer Vision
  (ICCV)}, Oct 2017.

\bibitem[Zhu et~al.(2017)Zhu, Park, Isola, and Efros]{zhu2017unpaired}
Jun-Yan Zhu, Taesung Park, Phillip Isola, and Alexei~A Efros.
\newblock Unpaired image-to-image translation using cycle-consistent
  adversarial networks.
\newblock In \emph{Proceedings of the IEEE international conference on computer
  vision}, pages 2223--2232, 2017.

\bibitem[Ho and Ermon(2016)]{ho2016generative}
Jonathan Ho and Stefano Ermon.
\newblock Generative adversarial imitation learning.
\newblock In \emph{Advances in neural information processing systems}, pages
  4565--4573, 2016.

\bibitem[Karras et~al.(2019)Karras, Laine, Aittala, Hellsten, Lehtinen, and
  Aila]{karras2019analyzing}
Tero Karras, Samuli Laine, Miika Aittala, Janne Hellsten, Jaakko Lehtinen, and
  Timo Aila.
\newblock Analyzing and improving the image quality of stylegan.
\newblock \emph{arXiv preprint arXiv:1912.04958}, 2019.

\bibitem[Azadi et~al.(2018)Azadi, Olsson, Darrell, Goodfellow, and
  Odena]{azadi2018discriminator}
Samaneh Azadi, Catherine Olsson, Trevor Darrell, Ian Goodfellow, and Augustus
  Odena.
\newblock Discriminator rejection sampling.
\newblock \emph{arXiv preprint arXiv:1810.06758}, 2018.

\bibitem[Turner et~al.(2019)Turner, Hung, Frank, Saatchi, and
  Yosinski]{pmlr-v97-turner19a-mhgan}
Ryan Turner, Jane Hung, Eric Frank, Yunus Saatchi, and Jason Yosinski.
\newblock {M}etropolis-{H}astings generative adversarial networks.
\newblock In Kamalika Chaudhuri and Ruslan Salakhutdinov, editors,
  \emph{Proceedings of the 36th International Conference on Machine Learning},
  volume~97 of \emph{Proceedings of Machine Learning Research}, pages
  6345--6353, Long Beach, California, USA, 09--15 Jun 2019. PMLR.
\newblock URL \url{http://proceedings.mlr.press/v97/turner19a.html}.

\bibitem[Tanaka(2019)]{tanaka2019discriminator}
Akinori Tanaka.
\newblock Discriminator optimal transport.
\newblock \emph{arXiv preprint arXiv:1910.06832}, 2019.

\bibitem[Arbel et~al.(2020)Arbel, Zhou, and Gretton]{arbel2020kale}
Michael Arbel, Liang Zhou, and Arthur Gretton.
\newblock Kale: When energy-based learning meets adversarial training, 2020.

\bibitem[Arjovsky et~al.(2017)Arjovsky, Chintala, and
  Bottou]{arjovsky2017wasserstein}
Martin Arjovsky, Soumith Chintala, and L{\'e}on Bottou.
\newblock Wasserstein gan.
\newblock \emph{arXiv preprint arXiv:1701.07875}, 2017.

\bibitem[Bengio et~al.(2013)Bengio, Mesnil, Dauphin, and
  Rifai]{DBLP:conf/icml/BengioMDR13}
Yoshua Bengio, Gr{\'{e}}goire Mesnil, Yann~N. Dauphin, and Salah Rifai.
\newblock Better mixing via deep representations.
\newblock In \emph{Proceedings of the 30th International Conference on Machine
  Learning, {ICML} 2013, Atlanta, GA, USA, 16-21 June 2013}, volume~28 of
  \emph{{JMLR} Workshop and Conference Proceedings}, pages 552--560. JMLR.org,
  2013.
\newblock URL \url{http://proceedings.mlr.press/v28/bengio13.html}.

\bibitem[Hoffman et~al.(2019)Hoffman, Sountsov, Dillon, Langmore, Tran, and
  Vasudevan]{hoffman2019neutra}
Matthew Hoffman, Pavel Sountsov, Joshua~V Dillon, Ian Langmore, Dustin Tran,
  and Srinivas Vasudevan.
\newblock Neutra-lizing bad geometry in hamiltonian monte carlo using neural
  transport.
\newblock \emph{arXiv preprint arXiv:1903.03704}, 2019.

\bibitem[Casella et~al.(2004)Casella, Robert, and Wells]{10.2307/4356322}
George Casella, Christian~P. Robert, and Martin~T. Wells.
\newblock Generalized accept-reject sampling schemes.
\newblock \emph{Lecture Notes-Monograph Series}, 45:\penalty0 342--347, 2004.
\newblock ISSN 07492170.
\newblock URL \url{http://www.jstor.org/stable/4356322}.

\bibitem[MacKay(2003)]{mackay2003information}
David~JC MacKay.
\newblock \emph{Information theory, inference and learning algorithms}.
\newblock Cambridge university press, 2003.

\bibitem[Tieleman(2008)]{tieleman2008training}
Tijmen Tieleman.
\newblock Training restricted boltzmann machines using approximations to the
  likelihood gradient.
\newblock In \emph{Proceedings of the 25th international conference on Machine
  learning}, pages 1064--1071, 2008.

\bibitem[Wu et~al.(2019)Wu, Donahue, Balduzzi, Simonyan, and
  Lillicrap]{wu2019logan}
Yan Wu, Jeff Donahue, David Balduzzi, Karen Simonyan, and Timothy Lillicrap.
\newblock Logan: Latent optimisation for generative adversarial networks.
\newblock \emph{arXiv preprint arXiv:1912.00953}, 2019.

\bibitem[Cranmer et~al.(2015)Cranmer, Pavez, and
  Louppe]{cranmer2015approximating}
Kyle Cranmer, Juan Pavez, and Gilles Louppe.
\newblock Approximating likelihood ratios with calibrated discriminative
  classifiers.
\newblock \emph{arXiv preprint arXiv:1506.02169}, 2015.

\bibitem[Deng et~al.(2020)Deng, Bakhtin, Ott, Szlam, and
  Ranzato]{Deng2020Residual}
Yuntian Deng, Anton Bakhtin, Myle Ott, Arthur Szlam, and Marc'Aurelio Ranzato.
\newblock Residual energy-based models for text generation.
\newblock In \emph{International Conference on Learning Representations}, 2020.
\newblock URL \url{https://openreview.net/forum?id=B1l4SgHKDH}.

\bibitem[Grover et~al.(2019)Grover, Song, Kapoor, Tran, Agarwal, Horvitz, and
  Ermon]{NIPS2019_9286}
Aditya Grover, Jiaming Song, Ashish Kapoor, Kenneth Tran, Alekh Agarwal, Eric~J
  Horvitz, and Stefano Ermon.
\newblock Bias correction of learned generative models using likelihood-free
  importance weighting.
\newblock In H.~Wallach, H.~Larochelle, A.~Beygelzimer, F.~d\textquotesingle
  Alch\'{e}-Buc, E.~Fox, and R.~Garnett, editors, \emph{Advances in Neural
  Information Processing Systems 32}, pages 11058--11070. Curran Associates,
  Inc., 2019.

\bibitem[Grover et~al.(2018)Grover, Gummadi, Lazaro-Gredilla, Schuurmans, and
  Ermon]{pmlr-v84-grover18a}
Aditya Grover, Ramki Gummadi, Miguel Lazaro-Gredilla, Dale Schuurmans, and
  Stefano Ermon.
\newblock Variational rejection sampling.
\newblock In Amos Storkey and Fernando Perez-Cruz, editors, \emph{Proceedings
  of the Twenty-First International Conference on Artificial Intelligence and
  Statistics}, volume~84 of \emph{Proceedings of Machine Learning Research},
  pages 823--832, Playa Blanca, Lanzarote, Canary Islands, 09--11 Apr 2018.
  PMLR.
\newblock URL \url{http://proceedings.mlr.press/v84/grover18a.html}.

\bibitem[Hinton and Salakhutdinov(2006)]{hinton2006reducing}
Geoffrey~E Hinton and Ruslan~R Salakhutdinov.
\newblock Reducing the dimensionality of data with neural networks.
\newblock \emph{science}, 313\penalty0 (5786):\penalty0 504--507, 2006.

\bibitem[Tao et~al.(2019)Tao, Chen, Dai, Chen, Bai, Wang, Feng, Lu, Bobashev,
  and Carin]{fenchel-minimax}
Chenyang Tao, Liqun Chen, Shuyang Dai, Junya Chen, Ke~Bai, Dong Wang, Jianfeng
  Feng, Wenlian Lu, Georgiy~V. Bobashev, and Lawrence Carin.
\newblock On fenchel mini-max learning.
\newblock In Hanna~M. Wallach, Hugo Larochelle, Alina Beygelzimer, Florence
  d'Alch{\'{e}}{-}Buc, Emily~B. Fox, and Roman Garnett, editors, \emph{Advances
  in Neural Information Processing Systems 32: Annual Conference on Neural
  Information Processing Systems 2019, NeurIPS 2019, 8-14 December 2019,
  Vancouver, BC, Canada}, pages 10427--10439, 2019.
\newblock URL
  \url{http://papers.nips.cc/paper/9230-on-fenchel-mini-max-learning}.

\bibitem[Gao et~al.(2020)Gao, Nijkamp, Kingma, Xu, Dai, and Wu]{gao2020flow}
Ruiqi Gao, Erik Nijkamp, Diederik~P Kingma, Zhen Xu, Andrew~M Dai, and
  Ying~Nian Wu.
\newblock Flow contrastive estimation of energy-based models.
\newblock In \emph{Proceedings of the IEEE/CVF Conference on Computer Vision
  and Pattern Recognition}, pages 7518--7528, 2020.

\bibitem[Gao et~al.(2018)Gao, Lu, Zhou, Zhu, and Nian~Wu]{gao2018learning}
Ruiqi Gao, Yang Lu, Junpei Zhou, Song-Chun Zhu, and Ying Nian~Wu.
\newblock Learning generative convnets via multi-grid modeling and sampling.
\newblock In \emph{Proceedings of the IEEE Conference on Computer Vision and
  Pattern Recognition}, pages 9155--9164, 2018.

\bibitem[Xie et~al.(2018)Xie, Lu, Gao, and Wu]{xie2018cooperative}
Jianwen Xie, Yang Lu, Ruiqi Gao, and Ying~Nian Wu.
\newblock Cooperative learning of energy-based model and latent variable model
  via mcmc teaching.
\newblock In \emph{AAAI}, volume~1, page~7, 2018.

\bibitem[Han et~al.(2017)Han, Lu, Zhu, and Wu]{han2017alternating}
Tian Han, Yang Lu, Song-Chun Zhu, and Ying~Nian Wu.
\newblock Alternating back-propagation for generator network.
\newblock In \emph{Thirty-First AAAI Conference on Artificial Intelligence},
  2017.

\bibitem[Han et~al.(2019)Han, Nijkamp, Fang, Hill, Zhu, and
  Wu]{han2019divergence}
Tian Han, Erik Nijkamp, Xiaolin Fang, Mitch Hill, Song-Chun Zhu, and Ying~Nian
  Wu.
\newblock Divergence triangle for joint training of generator model,
  energy-based model, and inferential model.
\newblock In \emph{Proceedings of the IEEE Conference on Computer Vision and
  Pattern Recognition}, pages 8670--8679, 2019.

\bibitem[Han et~al.(2020)Han, Nijkamp, Zhou, Pang, Zhu, and Wu]{han2020joint}
Tian Han, Erik Nijkamp, Linqi Zhou, Bo~Pang, Song-Chun Zhu, and Ying~Nian Wu.
\newblock Joint training of variational auto-encoder and latent energy-based
  model.
\newblock In \emph{Proceedings of the IEEE/CVF Conference on Computer Vision
  and Pattern Recognition}, pages 7978--7987, 2020.

\bibitem[Pang et~al.(2020)Pang, Han, Nijkamp, Zhu, and Wu]{pang2020learning}
Bo~Pang, Tian Han, Erik Nijkamp, Song-Chun Zhu, and Ying~Nian Wu.
\newblock Learning latent space energy-based prior model.
\newblock \emph{Advances in Neural Information Processing Systems}, 33, 2020.

\bibitem[LeCun et~al.(2006)LeCun, Chopra, Hadsell, Ranzato, and
  Huang]{lecun2006tutorial}
Yann LeCun, Sumit Chopra, Raia Hadsell, M~Ranzato, and F~Huang.
\newblock A tutorial on energy-based learning.
\newblock \emph{Predicting structured data}, 1\penalty0 (0), 2006.

\bibitem[Du and Mordatch(2019)]{du2019implicit}
Yilun Du and Igor Mordatch.
\newblock Implicit generation and modeling with energy based models.
\newblock In \emph{Advances in Neural Information Processing Systems}, pages
  3603--3613, 2019.

\bibitem[Salakhutdinov and Hinton(2009)]{salakhutdinov2009deep}
Ruslan Salakhutdinov and Geoffrey Hinton.
\newblock Deep boltzmann machines.
\newblock In \emph{Artificial intelligence and statistics}, pages 448--455,
  2009.

\bibitem[Grathwohl et~al.(2019)Grathwohl, Wang, Jacobsen, Duvenaud, Norouzi,
  and Swersky]{grathwohl2019your}
Will Grathwohl, Kuan-Chieh Wang, J{\"o}rn-Henrik Jacobsen, David Duvenaud,
  Mohammad Norouzi, and Kevin Swersky.
\newblock Your classifier is secretly an energy based model and you should
  treat it like one.
\newblock \emph{arXiv preprint arXiv:1912.03263}, 2019.

\bibitem[Hinton(2002)]{hinton2002training}
Geoffrey~E Hinton.
\newblock Training products of experts by minimizing contrastive divergence.
\newblock \emph{Neural computation}, 14\penalty0 (8):\penalty0 1771--1800,
  2002.

\bibitem[Nijkamp et~al.(2019)Nijkamp, Hill, Zhu, and Wu]{nijkamp2019learning}
Erik Nijkamp, Mitch Hill, Song-Chun Zhu, and Ying~Nian Wu.
\newblock Learning non-convergent non-persistent short-run mcmc toward
  energy-based model.
\newblock In \emph{Advances in Neural Information Processing Systems}, pages
  5233--5243, 2019.

\bibitem[Kim and Bengio(2016)]{kim2016deep}
Taesup Kim and Yoshua Bengio.
\newblock Deep directed generative models with energy-based probability
  estimation.
\newblock \emph{arXiv preprint arXiv:1606.03439}, 2016.

\bibitem[Kumar et~al.(2019)Kumar, Goyal, Courville, and
  Bengio]{kumar2019maximum}
Rithesh Kumar, Anirudh Goyal, Aaron Courville, and Yoshua Bengio.
\newblock Maximum entropy generators for energy-based models.
\newblock \emph{arXiv preprint arXiv:1901.08508}, 2019.

\bibitem[Hyv{\"a}rinen(2005)]{hyvarinen2005estimation}
Aapo Hyv{\"a}rinen.
\newblock Estimation of non-normalized statistical models by score matching.
\newblock \emph{Journal of Machine Learning Research}, 6\penalty0
  (Apr):\penalty0 695--709, 2005.

\bibitem[Gutmann and Hyvärinen(2010)]{GutmannM2010}
Michael Gutmann and Aapo Hyvärinen.
\newblock Noise-contrastive estimation: A new estimation principle for
  unnormalized statistical models.
\newblock In \emph{Proceedings of the Thirteenth International Conference on
  Artificial Intelligence and Statistics}, pages 297--304, 2010.

\bibitem[Sohl-Dickstein et~al.(2011)Sohl-Dickstein, Battaglino, and
  DeWeese]{sohl2011new}
Jascha Sohl-Dickstein, Peter~B Battaglino, and Michael~R DeWeese.
\newblock New method for parameter estimation in probabilistic models: minimum
  probability flow.
\newblock \emph{Physical review letters}, 107\penalty0 (22):\penalty0 220601,
  2011.

\bibitem[Zhao et~al.(2016)Zhao, Mathieu, and LeCun]{zhao2016energy}
Junbo Zhao, Michael Mathieu, and Yann LeCun.
\newblock Energy-based generative adversarial network.
\newblock \emph{arXiv preprint arXiv:1609.03126}, 2016.

\bibitem[Finn et~al.(2016)Finn, Christiano, Abbeel, and
  Levine]{finn2016connection}
Chelsea Finn, Paul Christiano, Pieter Abbeel, and Sergey Levine.
\newblock A connection between generative adversarial networks, inverse
  reinforcement learning, and energy-based models.
\newblock \emph{arXiv preprint arXiv:1611.03852}, 2016.

\bibitem[Dai et~al.(2019)Dai, Liu, Dai, He, Gretton, Song, and
  Schuurmans]{dai2019exponential}
Bo~Dai, Zhen Liu, Hanjun Dai, Niao He, Arthur Gretton, Le~Song, and Dale
  Schuurmans.
\newblock Exponential family estimation via adversarial dynamics embedding.
\newblock In \emph{Advances in Neural Information Processing Systems}, pages
  10977--10988, 2019.

\bibitem[Zhai et~al.(2019)Zhai, Talbott, Guestrin, and
  Susskind]{zhai2019adversarial}
Shuangfei Zhai, Walter Talbott, Carlos Guestrin, and Joshua Susskind.
\newblock Adversarial fisher vectors for unsupervised representation learning.
\newblock In \emph{Advances in Neural Information Processing Systems}, pages
  11158--11168, 2019.

\bibitem[van~den Oord et~al.(2016)van~den Oord, Kalchbrenner, Espeholt,
  Kavukcuoglu, Vinyals, and Graves]{pixelcnn}
A{\"{a}}ron van~den Oord, Nal Kalchbrenner, Lasse Espeholt, Koray Kavukcuoglu,
  Oriol Vinyals, and Alex Graves.
\newblock Conditional image generation with pixelcnn decoders.
\newblock In Daniel~D. Lee, Masashi Sugiyama, Ulrike von Luxburg, Isabelle
  Guyon, and Roman Garnett, editors, \emph{Advances in Neural Information
  Processing Systems 29: Annual Conference on Neural Information Processing
  Systems 2016, December 5-10, 2016, Barcelona, Spain}, pages 4790--4798, 2016.
\newblock URL
  \url{http://papers.nips.cc/paper/6527-conditional-image-generation-with-pixelcnn-decoders}.

\bibitem[Gulrajani et~al.(2017)Gulrajani, Ahmed, Arjovsky, Dumoulin, and
  Courville]{gulrajani2017improved}
Ishaan Gulrajani, Faruk Ahmed, Martin Arjovsky, Vincent Dumoulin, and Aaron~C
  Courville.
\newblock Improved training of wasserstein gans.
\newblock In \emph{Advances in neural information processing systems}, pages
  5767--5777, 2017.

\bibitem[Karras et~al.(2018)Karras, Aila, Laine, and Lehtinen]{progressive-gan}
Tero Karras, Timo Aila, Samuli Laine, and Jaakko Lehtinen.
\newblock Progressive growing of gans for improved quality, stability, and
  variation.
\newblock In \emph{6th International Conference on Learning Representations,
  {ICLR} 2018, Vancouver, BC, Canada, April 30 - May 3, 2018, Conference Track
  Proceedings}. OpenReview.net, 2018.
\newblock URL \url{https://openreview.net/forum?id=Hk99zCeAb}.

\bibitem[Song and Ermon(2019)]{ncsn}
Yang Song and Stefano Ermon.
\newblock Generative modeling by estimating gradients of the data distribution.
\newblock In H.~Wallach, H.~Larochelle, A.~Beygelzimer, F.~d\textquotesingle
  Alch\'{e}-Buc, E.~Fox, and R.~Garnett, editors, \emph{Advances in Neural
  Information Processing Systems 32}, pages 11895--11907. Curran Associates,
  Inc., 2019.

\bibitem[Salimans et~al.(2016)Salimans, Goodfellow, Zaremba, Cheung, Radford,
  and Chen]{improved_gan}
Tim Salimans, Ian~J. Goodfellow, Wojciech Zaremba, Vicki Cheung, Alec Radford,
  and Xi~Chen.
\newblock Improved techniques for training gans.
\newblock In Daniel~D. Lee, Masashi Sugiyama, Ulrike von Luxburg, Isabelle
  Guyon, and Roman Garnett, editors, \emph{Advances in Neural Information
  Processing Systems 29: Annual Conference on Neural Information Processing
  Systems 2016, December 5-10, 2016, Barcelona, Spain}, pages 2226--2234, 2016.
\newblock URL
  \url{http://papers.nips.cc/paper/6125-improved-techniques-for-training-gans}.

\bibitem[Heusel et~al.(2017)Heusel, Ramsauer, Unterthiner, Nessler, and
  Hochreiter]{fid-ttur}
Martin Heusel, Hubert Ramsauer, Thomas Unterthiner, Bernhard Nessler, and Sepp
  Hochreiter.
\newblock Gans trained by a two time-scale update rule converge to a local nash
  equilibrium.
\newblock In Isabelle Guyon, Ulrike von Luxburg, Samy Bengio, Hanna~M. Wallach,
  Rob Fergus, S.~V.~N. Vishwanathan, and Roman Garnett, editors, \emph{Advances
  in Neural Information Processing Systems 30: Annual Conference on Neural
  Information Processing Systems 2017, 4-9 December 2017, Long Beach, CA,
  {USA}}, pages 6626--6637, 2017.

\bibitem[Bauer and Mnih(2018)]{bauer2018resampled}
Matthias Bauer and Andriy Mnih.
\newblock Resampled priors for variational autoencoders.
\newblock \emph{arXiv preprint arXiv:1810.11428}, 2018.

\bibitem[Lawson et~al.(2019)Lawson, Tucker, Dai, and
  Ranganath]{lawson2019energy}
John Lawson, George Tucker, Bo~Dai, and Rajesh Ranganath.
\newblock Energy-inspired models: Learning with sampler-induced distributions.
\newblock In \emph{Advances in Neural Information Processing Systems}, pages
  8499--8511, 2019.

\bibitem[Farell and Perlstein(2018)]{nytimesfake}
Henry~J. Farell and Rick Perlstein.
\newblock Our hackable political future.
\newblock
  \url{https://www.nytimes.com/2018/02/04/opinion/hacking-politics-future.html},
  November 2018.
\newblock Accessed: 2018-02-07.

\bibitem[Cloud(2019)]{wavenetTTS}
Google Cloud.
\newblock Wavenet and other synthetic voices, 2019.
\newblock URL \url{https://cloud.google.com/text-to-speech/docs/wavenet}.

\bibitem[Engel et~al.(2017)Engel, Resnick, Roberts, Dieleman, Norouzi, Eck, and
  Simonyan]{engel2017neural}
Jesse Engel, Cinjon Resnick, Adam Roberts, Sander Dieleman, Mohammad Norouzi,
  Douglas Eck, and Karen Simonyan.
\newblock Neural audio synthesis of musical notes with wavenet autoencoders.
\newblock In \emph{Proceedings of the 34th International Conference on Machine
  Learning-Volume 70}, pages 1068--1077. JMLR. org, 2017.

\bibitem[Jones and Bonafilia(2017)]{jones2017gangogh}
Kenny Jones and Derrick Bonafilia.
\newblock Gangogh: creating art with gans.
\newblock \emph{Towards Data Science}, 2017.

\bibitem[Zhu et~al.(2016)Zhu, Kr{\"a}henb{\"u}hl, Shechtman, and
  Efros]{zhu2016generative}
Jun-Yan Zhu, Philipp Kr{\"a}henb{\"u}hl, Eli Shechtman, and Alexei~A. Efros.
\newblock Generative visual manipulation on the natural image manifold.
\newblock In \emph{Proceedings of European Conference on Computer Vision
  (ECCV)}, 2016.

\bibitem[Pinker(2018)]{pinker2018enlightenment}
Steven Pinker.
\newblock \emph{Enlightenment now: The case for reason, science, humanism, and
  progress}.
\newblock Penguin, 2018.

\bibitem[Zou and Schiebinger(2018)]{zou2018ai}
James Zou and Londa Schiebinger.
\newblock Ai can be sexist and racist—it’s time to make it fair, 2018.

\bibitem[Radford et~al.(2016)Radford, Metz, and Chintala]{dcgan}
Alec Radford, Luke Metz, and Soumith Chintala.
\newblock Unsupervised representation learning with deep convolutional
  generative adversarial networks.
\newblock In Yoshua Bengio and Yann LeCun, editors, \emph{4th International
  Conference on Learning Representations, {ICLR} 2016, San Juan, Puerto Rico,
  May 2-4, 2016, Conference Track Proceedings}, 2016.
\newblock URL \url{http://arxiv.org/abs/1511.06434}.

\bibitem[Welling and Teh(2011)]{sgld}
Max Welling and Yee~Whye Teh.
\newblock Bayesian learning via stochastic gradient langevin dynamics.
\newblock In Lise Getoor and Tobias Scheffer, editors, \emph{Proceedings of the
  28th International Conference on Machine Learning, {ICML} 2011, Bellevue,
  Washington, USA, June 28 - July 2, 2011}, pages 681--688. Omnipress, 2011.
\newblock URL \url{https://icml.cc/2011/papers/398\_icmlpaper.pdf}.

\end{thebibliography}
\bibliographystyle{unsrtnat}

\clearpage
\appendix
\onecolumn
\section{Proofs}
\label{ap:proof}
\subsection{Proof of the Main Theorem}

\begin{lemma}
On space $X$ there is a probability distribution $p(x)$. $r(x): X\rightarrow[0,1]$ is a measurable function on $X$.  We consider sampling from $p$, accepting with probability $r(x)$, and repeating this procedure until a sample is accepted. We denote the resulting probability measure of the accepted samples $q(x)$. Then we have:
\begin{equation}
    q(x) = p(x) r(x)\,/\,Z ,\ Z= \mathbb{E}_p[r(x)].
\end{equation}
\end{lemma}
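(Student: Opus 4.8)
The plan is to compute the law of an accepted sample by conditioning on the (random) number of proposals drawn before acceptance first occurs, exploiting the fact that the successive proposals are i.i.d.\ draws from $p$, each accepted independently with probability $r(x)$. This ``conditioning on the trial count'' is the natural way to turn the operational description (``repeat until accepted'') into an explicit density.

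First I would isolate a single trial. For a measurable set $B\subseteq X$, the probability of proposing a point lying in $B$ and accepting it within one trial is $\int_B p(x)r(x)\,dx$, so the total single-trial acceptance probability is $Z=\int_X p(x)r(x)\,dx=\mathbb{E}_p[r(x)]$. I would record here that the procedure terminates almost surely precisely when $Z>0$, which I take as the standing nondegeneracy assumption; if $Z=0$ no accepted sample exists and the statement is vacuous.

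Next I would let $N\ge 1$ denote the index of the first accepted proposal and $X_N$ the accepted sample. Because the trials are independent and identically distributed, the event $\{N=n\}$ requires the first $n-1$ proposals to be rejected, an event of probability $(1-Z)^{n-1}$, followed by an accepted proposal landing in $B$, so
\begin{equation}
    P(X_N\in B,\,N=n)=(1-Z)^{n-1}\int_B p(x)r(x)\,dx .
\end{equation}
Summing over $n\ge 1$ and using the geometric series $\sum_{n\ge 1}(1-Z)^{n-1}=1/Z$ (valid since $0<Z\le 1$) yields
\begin{equation}
    q(B)=P(X_N\in B)=\frac{1}{Z}\int_B p(x)r(x)\,dx ,
\end{equation}
which is exactly the claim $q(x)=p(x)r(x)/Z$ with $Z=\mathbb{E}_p[r(x)]$.

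The computation is elementary once the setup is fixed, so there is no deep obstacle; the two points that actually need care are (i) justifying that repeating the trials leaves the conditional law unchanged, which is encoded in the memoryless/geometric structure above, and (ii) ensuring $Z>0$ so that the procedure halts almost surely and the normalization is well defined. If $X$ is a general measurable space rather than Euclidean, I would phrase everything directly in terms of the measure $B\mapsto q(B)$, replacing $p(x)r(x)\,dx$ by the measure $B\mapsto\int_B r\,dp$; the argument is otherwise identical.
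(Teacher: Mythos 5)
Your proof is correct, but it takes a different route from the paper's. The paper argues in reverse: it invokes the textbook correctness of rejection sampling (to draw from a target $q$ using proposal $p$, one accepts with probability $q(x)/(Mp(x))$ for any constant $M \geq \sup_x q(x)/p(x)$), and then verifies that with the candidate target $q = pr/Z$ and the choice $M = 1/Z$, this prescribed acceptance probability coincides exactly with the given $r(x)$ — so the procedure in the lemma is an instance of standard rejection sampling targeting $pr/Z$. Your proof instead establishes the result from first principles: you decompose over the index $N$ of the first accepted trial, use independence to get the factor $(1-Z)^{n-1}$, and sum the geometric series to obtain $q(B) = \frac{1}{Z}\int_B p\,r\,dx$. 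Your route is more self-contained — it does not presuppose the very mechanism being analyzed, whereas the paper's argument is only as rigorous as the black-box theorem it cites — and you explicitly handle the nondegeneracy condition $Z > 0$ needed for almost-sure termination, which the paper silently assumes. What the paper's approach buys is brevity, and it makes transparent the connection to the usual rejection-sampling setup (identifying the implicit constant $M$), which is the form actually reused in the proof of the main theorem; your version would serve equally well there and is the stronger piece of mathematics.
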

\begin{proof}
From the definition of rejection sampling, we can see that in order to get the distribution $q(x)$, we can sample $x$ from $p(x)$ and do rejection sampling with probability $r'(x) = q(x)\,/\,(Mp(x))$, where $M\geq q(x)/\,p(x)$ for all $x$.  So we have $r'(x)=r(x)\,/\,(ZM)$. If we choose $M=1/\,Z$, then from $r(x)\leq 1$ for all $x$, we can see that $M$ satisfies $M\geq q(x)/\,p(x)=r(x)\,/\,Z$, for all $x$. So we can choose $M=1\,/\,Z$, resulting in $r(x)=r'(x)$. 
\end{proof}

\begin{theorem}
Assume $p_d$ is the data generating distribution, and $p_g$ is the generator distribution induced by the generator $G: \mathcal{Z}\rightarrow \mathcal{X}$, where $\mathcal{Z}$ is the latent space with prior distribution $p_0(z)$. Define $p_d^*=e^{\log p_g(x)+d(x)}/\,Z_0$, where $Z_0$ is the normalization constant. 

Assume $p_g$ and $p_d$ have the same support. This assumption is typically satisfied when $\operatorname{dim}(z) \geq \operatorname{dim}(x)$. We address the case that $\operatorname{dim}(z) < \operatorname{dim}(x)$ in Corollary \ref{cor noise}. 
Further, let $D(x)$ be the  discriminator, and $d(x)$ be the logit of $D$, namely $D(x)=\sigma\left(d(x)\right)$. We define the energy function $E(z)= -\log p_0(z)-d(G(z))$, and its Boltzmann distribution $p_t(z)=e^{-E(z)}/\,Z$. Then we have:
\begin{enumerate}
    \item $p^*_d = p_d$ when $D$ is the optimal discriminator. 
    \item If we sample $z\sim p_t$, and $x = G(z)$, then we have $x\sim p^*_d$. Namely, the induced probability measure 
    $G \circ p_t = p^*_d$.
\end{enumerate}
\end{theorem}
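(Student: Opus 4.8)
The plan is to prove the two parts separately: the optimal-discriminator identity gives the first, and a pushforward (law-of-the-unconscious-statistician) computation, which is exactly the content of Lemma 1, gives the second.

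For part~1, I would start from the optimal-discriminator relation $D(x)=\sigma(d(x))=p_d(x)/(p_d(x)+p_g(x))$ and solve for $e^{d(x)}$. Writing $\sigma(d(x))=1/(1+e^{-d(x)})$ and equating with $p_d/(p_d+p_g)$ gives $e^{-d(x)}=p_g(x)/p_d(x)$, hence $e^{d(x)}=p_d(x)/p_g(x)$ on the common support. Substituting into $p^*_d=p_g(x)e^{d(x)}/Z_0$ yields $p^*_d=p_d(x)/Z_0$; integrating both sides and using that $p_d$ is already a normalized density forces $Z_0=1$, so $p^*_d=p_d$. This is the short part.

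For part~2, the goal is to identify the pushforward $G\circ p_t$, i.e. the law of $x=G(z)$ when $z\sim p_t$. I would compute, for an arbitrary measurable set $A\subseteq\mathcal{X}$,
\[
\Pr_{z\sim p_t}[G(z)\in A]=\frac{1}{Z}\int_{G^{-1}(A)} p_0(z)\,e^{d(G(z))}\,dz.
\]
The key observation is that the integrand depends on $z$ only through $G(z)$: it equals $h(G(z))$ for $h(x)=e^{d(x)}\mathbf{1}[x\in A]$. Since $p_g=G\circ p_0$ by definition of the generator distribution, the identity $\mathbb{E}_{z\sim p_0}[h(G(z))]=\mathbb{E}_{x\sim p_g}[h(x)]$ lets me rewrite the $z$-integral as $\int_A e^{d(x)}p_g(x)\,dx$ without ever inverting $G$. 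Applying the same identity to $Z=\int p_0(z)e^{d(G(z))}\,dz$ shows $Z=\int e^{d(x)}p_g(x)\,dx=Z_0$. Combining, $\Pr_{z\sim p_t}[G(z)\in A]=\int_A p_g(x)e^{d(x)}/Z_0\,dx=\int_A p^*_d(x)\,dx$, which is exactly the claim $G\circ p_t=p^*_d$. Equivalently, this reads off from Lemma~1: the acceptance probability $r(z)=e^{d(G(z))}/(MZ_0)$ factors through $G$, so rejection sampling in $z$-space and in $x$-space describe the same accepted samples, whose latent law is $p_t$ and whose pixel law is $p^*_d$.

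The main obstacle, and the only place demanding care, is that $G$ is generally many-to-one (and dimension-reducing when $\dim z>\dim x$), so a naive change of variables from $z$ to $x$ is unavailable. The resolution is precisely the observation above: because the integrand is a function of $z$ solely through $G(z)$, I need only the pushforward relation $p_g=G\circ p_0$ rather than an inverse of $G$, and the argument then goes through for arbitrary non-injective $G$. I would also record the preconditions implicit in Lemma~1 (that $r$ maps into $[0,1]$, guaranteed by choosing $M\ge\max_x p^*_d/p_g$) and the same-support assumption, which ensures $e^{d(x)}$ and the relevant densities are well-defined where used.
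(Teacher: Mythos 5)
Your proof is correct, and part~2 takes a genuinely different route from the paper's. The paper proves part~2 via rejection sampling: it notes that accepting samples $x\sim p_g$ with probability $p_d^*(x)/(Mp_g(x))=e^{d(x)}/(MZ_0)$ yields $p_d^*$, reinterprets this as rejection sampling on $z\sim p_0$ with acceptance probability $r(z)=e^{d(G(z))}/M$ (legitimate because the acceptance test factors through $G$), and then invokes its ``reverse'' Lemma~1 to identify the law of accepted latents as $p_t(z)\propto p_0(z)r(z)$. You instead compute the pushforward $G\circ p_t$ directly: for any measurable $A$, the integrand $p_0(z)e^{d(G(z))}\mathbf{1}[G(z)\in A]$ factors through $G$, so LOTUS with $p_g = G\circ p_0$ converts the latent integral to $\int_A p_g(x)e^{d(x)}\,dx$, and the same identity applied to the normalizer gives $Z=Z_0$. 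Each approach buys something: yours is cleaner and slightly more general, since it never needs the finite envelope constant $M\ge \sup_x p_d^*(x)/p_g(x)$ (whose existence the paper has to justify separately via compactness and added noise), and it makes the identity $Z=Z_0$ explicit, where the paper leaves the relation among its constants $C$, $Z$, $Z_0$ implicit; the paper's rejection-sampling framing, on the other hand, directly motivates the algorithmic connection to DRS/MH-GAN that the rest of the paper builds on. Your part~1 is the same argument as the paper's. One small caution: your statement ``rejection sampling in $z$-space and in $x$-space describe the same accepted samples'' is the one step of the paper's argument that deserves the justification you in fact supply elsewhere, namely that the acceptance probability depends on $z$ only through $G(z)$ — since your main argument does not rely on this remark, nothing is at stake, but if you kept the rejection-sampling phrasing you would want to spell it out.
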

\begin{proof}
(1) follows from the fact that when $D$ is optimal, $D(x)=\frac{p_g}{p_d+p_g}$, so $D(x) = \sigma(\log p_d -\log p_g)$, which implies that $d(x) = \log p_d -\log p_g$ (which is finite on the support of $p_g$ due to the fact that they have the same support). Thus, $p_d^*(x)=p_d(x)/\,Z_0$, we must have $Z_0=1$ for normalization, so $p_d^*=p_d$.

For (2), for samples $x\sim p_g$, if we do rejection sampling with probability $p^*_d(x)/\,(M p_g(x))=e^{d(x)}/\,(MZ_0)$ (where $M$ is a constant with $M\geq p^*_d(x)/\,p_g(x)$), we get samples from the distribution $p_d^*$. We can view this rejection sampling as a rejection sampling in the latent space $\mathcal{Z}$, where we perform rejection sampling on $p_0(z)$ with acceptance probability $r(z)= p_d^*(G(z))/\,(M p_g(G(z)))= e^{d(G(z))}/\,M$. Applying lemma 1, we see that this rejection sampling procedure induces a probability distribution $p_t(z) = p_0(z)r(z)/C$ on the latent space $\mathcal{Z}$. $C$ is the normalization constant. Thus sampling from $p_d^*(x)$ is equivalent to sampling from $p_t(z)$ and generating with $G(z)$. 
\end{proof}

\begin{corollary}
\label{cor noise}
Assume $p_d$ is the data generating distribution with small Gaussian noise added. The generator $G: \mathcal{Z}\rightarrow \mathcal{X}$ is a deterministic function, where $\mathcal{Z}$ is the latent space endowed with prior distribution $p_0(z)$. Assume $z'\sim p_1(z')=N(0,1;z)$ is an additional Gaussian noise variable with $\dim z'= \dim \mathcal{X}$. Let $\epsilon>0$, denote the distribution of the extended generator $G^*(z,z')=G(z)+\epsilon z'$ as $p_g$.  $D(x)$ is the  discriminator trained between $p_g$ and $p_d$. Let $d(x)$ be the logit of $D$, namely $D(x)=\sigma\left(d(x)\right)$. Define $p_d^*=e^{\log p_g(x)+d(x)}/\,Z_0$, where $Z_0$ is the normalization constant. We define the energy function in the extended latent space $E(z,z')= -\log p_0(z)-\log p_1(z')-d(G^*(z,z'))$, and its Boltzmann distribution $p_t(z,z')=e^{-E(z,z')}/\,Z$. Then we have:
\begin{enumerate}
    \item $p^*_d = p_d$ when $D$ is the optimal discriminator. 
    \item If we sample $(z,z')\sim p_t$, and $x = G^*(z,z')$, then we have $x\sim p^*_d$. Namely, the induced probability measure $G^* \circ p_t = p^*_d$.
\end{enumerate}
\end{corollary}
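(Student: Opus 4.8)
The plan is to deduce Corollary~\ref{cor noise} as a direct instance of the Main Theorem, applied not to $G$ but to the noise-augmented generator $G^*$ over an enlarged latent space. First I would set $\tilde{\mathcal{Z}} = \mathcal{Z} \times \mathbb{R}^{\dim \mathcal{X}}$ and equip it with the product prior $\tilde{p}_0(z,z') = p_0(z)\, p_1(z')$; since $z$ and $z'$ are drawn independently in the Corollary's sampling scheme, this is exactly the joint latent prior at play. The map $G^*(z,z') = G(z) + \epsilon z'$ is then a single deterministic function $\tilde{\mathcal{Z}} \to \mathcal{X}$, and the stated energy $E(z,z') = -\log p_0(z) - \log p_1(z') - d(G^*(z,z'))$ equals $-\log \tilde{p}_0(z,z') - d(G^*(z,z'))$, which is precisely the energy $-\log(\text{prior}) - d(G^*(\cdot))$ appearing in the Main Theorem with $\tilde{p}_0$ playing the role of the latent prior and $G^*$ the role of $G$. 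Thus $p_t(z,z') = e^{-E(z,z')}/Z$ is literally the Boltzmann distribution of the Main Theorem for this augmented system.

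The one hypothesis of the Main Theorem that is not automatic is that $p_g$ and $p_d$ share the same support, and verifying it is the only real content beyond bookkeeping. Here I would use the fact that $G^*$ convolves the original push-forward with an isotropic Gaussian: writing $\nu$ for the distribution of $G(z)$ with $z \sim p_0$, we have $p_g = \nu * N(0, \epsilon^2 I)$, and since $\dim z' = \dim \mathcal{X}$ with $p_1$ a nondegenerate Gaussian, $p_g$ is a smooth density with full support on $\mathcal{X}$. By assumption $p_d$ also has small Gaussian noise added, so it likewise has full support. Hence both densities are supported on all of $\mathcal{X}$, and the same-support hypothesis holds; this is exactly the purpose of introducing $z'$, and it is the step I would expect to require the most care, even though it is ultimately mild.

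With the hypotheses in place, both conclusions follow by invoking the theorem verbatim. For (1), optimality of $D$ gives $d(x) = \log p_d(x) - \log p_g(x)$ on the common support, now all of $\mathcal{X}$; substituting into $p_d^* = p_g e^{d}/Z_0$ yields $p_d^* = p_d/Z_0$, and normalization forces $Z_0 = 1$, so $p_d^* = p_d$. For (2), I would reuse the rejection-sampling reinterpretation from the Main Theorem's proof: drawing $x \sim p_g$ and accepting with probability $p_d^*(x)/(M p_g(x)) = e^{d(x)}/(M Z_0)$ produces $p_d^*$, and this acceptance test pulls back to the augmented latent space, where accepting $(z,z') \sim \tilde{p}_0$ with probability $e^{d(G^*(z,z'))}/M$ induces, by Lemma~1, the distribution $p_t(z,z') \propto \tilde{p}_0(z,z')\, e^{d(G^*(z,z'))} = e^{-E(z,z')}$. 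Pushing $p_t$ forward through $G^*$ therefore returns $p_d^*$, i.e.\ $G^* \circ p_t = p_d^*$, completing the proof.
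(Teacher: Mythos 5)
Your proposal is correct and follows essentially the same route as the paper: the paper's own proof simply views $G^*$ as the generator of Theorem~1 over the extended latent space, notes that $p_g$ and $p_d$ then share the same (full) support, and applies the theorem. Your write-up additionally spells out the support verification (via the Gaussian convolution $p_g = \nu * N(0,\epsilon^2 I)$) and the rejection-sampling step, which the paper leaves implicit but which matches its intent exactly.
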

\begin{proof}
Let $G^*(z,z')$ be the generator $G$ defined in Theorem 1, we can see that $p_d$ and $p_g$ have the same support. Apply Theorem 1 and we deduce the corollary.
\end{proof}

\section{An Analysis of WGAN}

\subsection{An Analysis of the DOT algorithm}
In this section, we first give an example that in WGAN, given the optimal discriminator $D$ and $p_g$, it is not possible to recover $p_d$.

Consider the following case: the underlying space is one dimensional space of real numbers $\mathbf{R}$. $p_g$ is the Dirac $\delta$-distribution $\delta_{-1}$ and data distribution $p_d$ is the Dirac $\delta$-distribution $\delta_{a}$, where $a>0$ is a constant. 

We can easily identity function $f(x)=x$ is the optimal $1$-Lipschitz function which separates $p_g$ and $p_d$. Namely, we let $D(x)=x$ is the optimal discriminator. 

However, $D$ is not a function of $a$. Namely, we cannot recover $p_d = \delta_{a}$ with information provided by $D$ and $p_g$.  This is the main reason that collaborative sampling algorithms based on W-GAN formulation such as DOT could not provide exact theoretical guarantee, even if the discriminator is optimal. 
\subsection{Mathematical Details of Approximating WGAN with EBMs}
In the paper, we show that the optimization of WGAN can be viewed as an approximation of an energy-based model.
We present more details here.

For Eq. (3):
\begin{equation}
\begin{aligned}
    \nabla_\phi \text{KL}(p_d||p_t) 
     =& \nabla_\phi \mathbb{E}_{p_d}[-\log p_t(x)] \\
     =& \nabla_\phi \mathbb{E}_{p_d}[-\log p_g(x)-D(x)+\log Z]\\
     =&  -\mathbb{E}_{p_d}[\nabla_\phi D(x)] + \mathbb{E}_{p_d}[\nabla_\phi\log Z]\\ 
     =& -\mathbb{E}_{p_d}[\nabla_\phi D(x)] + \nabla_\phi Z/Z \\
     =& -\mathbb{E}_{p_d}[\nabla_\phi D(x)] + \sum_x[p_g(x)e^{D(x)}\nabla_\phi D(x)]/Z \\
     =& -\mathbb{E}_{p_d}[\nabla_\phi D(x)] + \sum_x[p_t(x)\nabla_\phi D(x)] \\
     =&\mathbb{E}_{p_t}[\nabla_\phi D(x)] - \mathbb{E}_{p_d}[\nabla_\phi D(x)] \\
\end{aligned}    
\end{equation}

For Eq. (4):
\begin{equation}
\begin{aligned}
    \nabla_\theta \text{KL}(p_g||p'_t) 
     =& \nabla_\theta \mathbb{E}_{p_g}[\log p_g(x)-\log p'_t(x)] \\
     =&  \mathbb{E}_{p_g}[\nabla_\theta \log p_g(x)] + \sum_x[\log p_g(x)-\log p'_t(x)]\nabla_\theta p_g(x)\\
     =& 0 +  \sum_x[-D(x)]\nabla_\theta p_g(x) \\ 
     =& - \sum_x D(x) \nabla_\theta p_g(x) \\
     =& -\nabla_\theta \mathbb{E}_{p_g}[D(x)] = -\mathbb{E}_{z\sim p_0(z)} [\nabla_\theta D(G(z))]\\
\end{aligned}
\end{equation}

\section{Experimental details}
Source code of all experiments of this work is included in the supplemental material
, where all detailed hyper-parameters can be found. 

\subsection{Synthetic}
\label{ap:synthetic}
The 25-Gaussians dataset is generated by a mixture of twenty-five two-dimensional isotropic Gaussian distributions
with variance 0.01, and means separated by 1,
arranged in a grid. The Swiss Roll dataset is a standard dataset for testing dimensionality reduction algorithms. We use the implementation from scikit-learn, and rescale the coordinates as suggested by \citep{tanaka2019discriminator}.  We train a Wasserstein GAN model with the standard WGAN-GP objective. Both the generator and discriminator are fully connected neural networks with ReLU nonlinearities, and we follow the same architecture design as in DOT \citep{tanaka2019discriminator}, while  parameterizing the prior with a standard normal distribution instead of a uniform distribution. We optimize the model using the Adam optimizer, with $\alpha=0.0001, \beta_1=0.5, \beta_2=0.9$.

\subsection{CIFAR-10 and CelebA}
\label{ap:cifar}

\begin{figure}[htbp]
\begin{minipage}[t]{0.60\linewidth}
    \includegraphics[width=\linewidth]{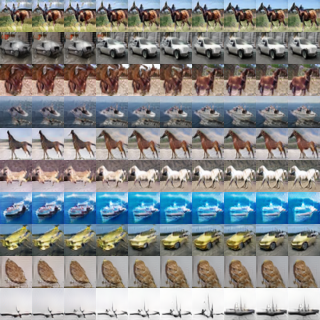}
    \caption{CIFAR-10 Langevin dynamics visualization, initialized at a sample from the generator (left column). The latent space Markov chain appears to mix quickly, as evidenced by the diverse samples generated by a short chain. Additionally, the visual quality of the samples improves over the course of sampling, providing evidence that DDLS improves sample quality.}
    \label{fig:cifar-dynamics}
\end{minipage}%
    \hfill%
\begin{minipage}[t]{0.361\linewidth}
    \includegraphics[width=\linewidth]{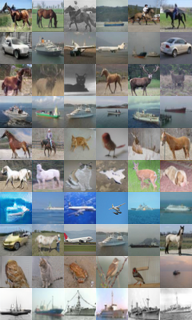}
    \caption{Top-5 nearest neighbor images (right columns) of generated CIFAR-10 samples (left column).}
    \label{fig:cifar-nearest-neighbor}
\end{minipage} 
\end{figure}

For CIFAR-10 dataset, we adopt the Spectral Normalization GAN (SN-GAN) \citep{sngan} as our baseline GAN model. We take the publicly available pre-trained models of unconditional SN-GAN and apply DDLS. For CelebA dataset, we adopt DCGAN and WGAN as the baseline model following the same setting in \citep{dcgan}. We first sample latent codes from the prior distribution, then run the Langevin dynamics procedure with an initial step size $0.01$ up to $1000$ iterations to generate enhanced samples. Following the practice in \citep{sgld} we separately set the standard deviation of the Gaussian noise as $0.1$. We optionally fine-tune the pre-trained discriminator with an additional fully-connected layer and a logistic output layer using the binary cross-entropy loss to calibrate the discriminator as suggested by \citep{azadi2018discriminator, pmlr-v97-turner19a-mhgan}.

We show more generated samples of DDLS during langevin dynamics in Fig.~\ref{fig:supp_cifar}.
We run $1000$ steps of Langevin dynamics and plot generated sample for every $10$ iterations.
\emph{We include $10000$ more randomly generated samples in the supplemental material.}

\subsection{Imagenet}
\label{ap:imagenet}
We introduce more details of the preliminary experimental results on Imagenet dataset here.
We run the Langevin dynamics sampling algorithm with an initial step size $0.01$ up to $1000$ iterations.
We decay the step size with a factor $0.1$ for every $200$ iterations.
The standard deviation of Gaussian noise is annealed simultaneously with the step size.
The discriminator is not yet calibrated in this preliminary experiment.

\section{DDLS Algorithm}
We show the detailed algorithm using Langevin dynamics in Alg.~\ref{alg:1}.

\begin{figure}[tb]
    \centering
    \includegraphics[width=0.4\linewidth]{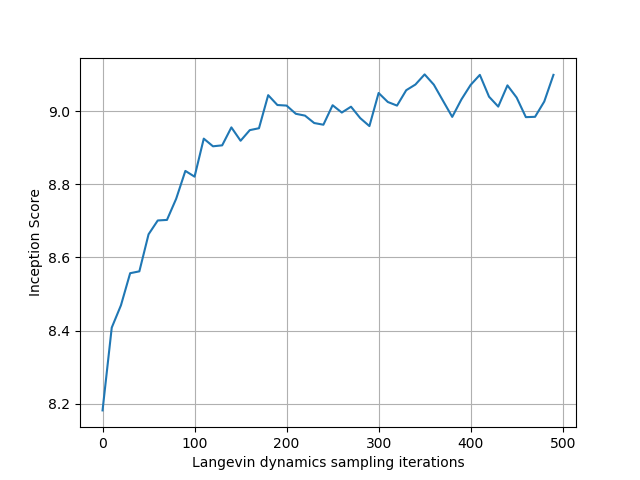}
    \caption{Progression of Inception Score with more Langevin dynamics sampling steps.}
    \label{fig:cifar-inceptino-score}
\end{figure}

\begin{algorithm}[tb]
  \caption{Discriminator Langevin Sampling}
  \label{alg:1}
\begin{algorithmic}
  \STATE {\bfseries Input:} $N\in \mathbb{N_+},\epsilon>0$
  \STATE {\bfseries Output:} Latent code $z_N\sim p_t(z)$
  \STATE Sample $z_0\sim p_0(z)$.
  \FOR{$i<N$}
  \STATE $n_i\sim N(0,1)$
  \STATE $z_{i+1} = z_i - \epsilon/2 \nabla_zE(z) + \sqrt{\epsilon}n_i$
  \STATE $i=i+1$
  \ENDFOR
\end{algorithmic}
\end{algorithm}

\begin{figure}[ht]
\begin{center}
\begin{tabular}{c}
\includegraphics[width=0.9\columnwidth]{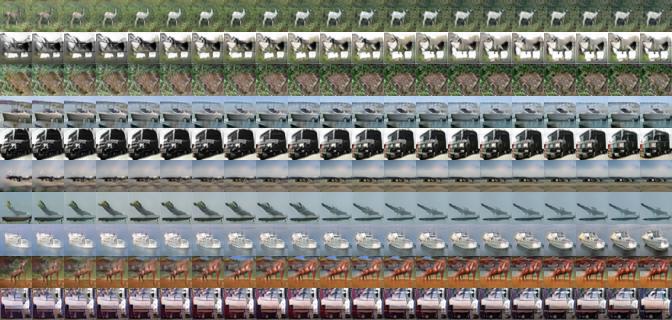} \\
\includegraphics[width=0.9\columnwidth]{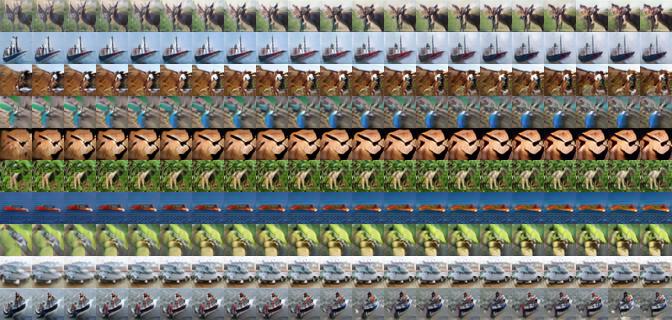} \\
\includegraphics[width=0.9\columnwidth]{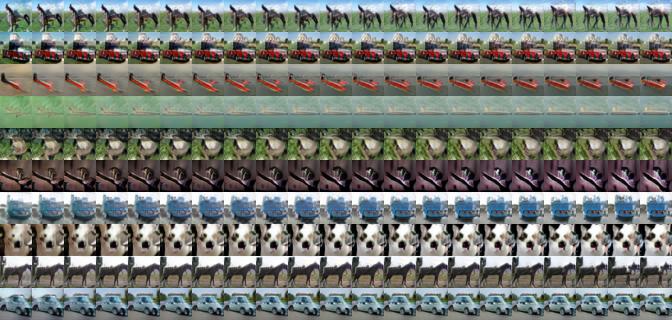} \\
\end{tabular}
\caption{CIFAR-10 Langevin dynamics visualization}
\label{fig:supp_cifar}
\end{center}
\vskip -0.2in
\end{figure}

\section{Hybrid WGAN-EBM Training Algorithm}
In Sec. 3.4, we described an EBM algorithm which WGAN is approximately optimizing. Here we detail this algorithm in Alg. ~\ref{alg:2}. 

\begin{algorithm}[tb]
  \caption{WGAN-EBM Hybrid Algorithm}
  \label{alg:2}
\begin{algorithmic}
  \STATE {\bfseries Input:} $N\in \mathbb{N_+},\epsilon>0,\delta>0$, Initialized $D_\phi(x), G_\theta(z)$
  \STATE {\bfseries Output:} Trained $D_\phi(x), G_\theta(z)$
  \FOR{Model Not Converged}
  \STATE Sample a batch $z^k_0\sim p_0(z),k=1,2,\cdots M$.
  \STATE Sample a batch of real data $x^k, k=1,2,\cdots M$.
  \FOR{$i< N$}
  \STATE $n^k_i\sim N(0,1)$
  \STATE $z^k_{i+1} = z^k_i - \epsilon/2 \nabla_zE(z_i^k) + \sqrt{\epsilon}n^k_i$,
  \COMMENT{$E(z)=-\log p_0(x)-D(G(z))$}
  \STATE $i=i+1$
  \ENDFOR
  \STATE $\phi = \phi - \delta(\sum_{k=1}^M \nabla_\phi D(G(z^k_N)) - \nabla_\phi D(x^k))$  
  \STATE $\theta = \theta + \delta \sum_{k=1}^M \nabla_\theta D(G(z_0^k)) $
  \ENDFOR
\end{algorithmic}
\end{algorithm}

\end{document}